\def\eqref#1{equation~\ref{#1}}
\def\1{\bm{1}}
\def\vtheta{{\bm{\theta}}}
\def\vg{{\bm{g}}}
\def\vv{{\bm{v}}}
\def\vw{{\bm{w}}}
\def\vx{{\bm{x}}}
\def\vy{{\bm{y}}}
\DeclareMathAlphabet{\mathsfit}{\encodingdefault}{\sfdefault}{m}{sl}
\SetMathAlphabet{\mathsfit}{bold}{\encodingdefault}{\sfdefault}{bx}{n}
\edef\csname\next\endcsname{%
      \noexpand\DOTSI
      \expandafter\noexpand\csname\next op\endcsname
      \noexpand\ilimits@
    }%
\@nx\else[{#1}]\fi}%
\@nx\else[{#1}]\fi\else\csname #2\@xa\endcsname\fi}%
\newcommand{\pushright}[1]{\ifmeasuring@#1\else\omit\hfill$\displaystyle#1$\fi\ignorespaces}
\newcommand{\pushleft}[1]{\ifmeasuring@#1\else\omit$\displaystyle#1$\hfill\fi\ignorespaces}
\newcommand*{\Scale}[2][4]{\scalebox{#1}{$#2$}}%
\newcommand{\vect}[1]{\boldsymbol{#1}}
\newcommand{\RN}[1]{%
    \textup{\lowercase\expandafter{\it \romannumeral#1}}%
}
\icmltitlerunning{NeuralEF: Deconstructing Kernels by Deep Neural Networks}
\begin{document}

\twocolumn[
\icmltitle{NeuralEF: Deconstructing Kernels by Deep Neural Networks}

\begin{icmlauthorlist}
\icmlauthor{Zhijie Deng}{thu,sjtu}
\icmlauthor{Jiaxin Shi}{msr}
\icmlauthor{Jun Zhu}{thu}
\end{icmlauthorlist}

\icmlaffiliation{thu}{Dept. of Comp. Sci. \& Tech., BNRist Center, Tsinghua-Bosch Joint Center for ML, Tsinghua University; Peng Cheng Laboratory}
\icmlaffiliation{sjtu}{Qingyuan Research, Shanghai Jiao Tong University}
\icmlaffiliation{msr}{Microsoft Research New England}

\icmlcorrespondingauthor{Jun Zhu}{dcszj@tsinghua.edu.cn}

\icmlkeywords{kernel methods, neural networks, eigendecomposition, kernel approximation}

\vskip 0.3in
]

\printAffiliationsAndNotice{}  %

\begin{abstract}
Learning the principal eigenfunctions of an integral operator defined by a kernel and a data distribution is at the core of many machine learning problems. Traditional nonparametric solutions based on the Nystr{\"o}m formula suffer from scalability issues. Recent work has resorted to a parametric approach, i.e., training neural networks to approximate the eigenfunctions. However, the existing method relies on an expensive orthogonalization step and is difficult to implement. We show that these problems can be fixed by using a new series of objective functions that generalizes the EigenGame~\citep{gemp2020eigengame} to function space. We test our method on a variety of supervised and unsupervised learning problems and show it provides accurate approximations to the eigenfunctions of polynomial, radial basis, neural network Gaussian process, and neural tangent kernels. Finally, we demonstrate our method can scale up linearised Laplace approximation of deep neural networks to modern image classification datasets through approximating the Gauss-Newton matrix. Code is available at \url{https://github.com/thudzj/neuraleigenfunction}. 
\end{abstract}

\section{Introduction}

Kernel methods~\cite{muller2001introduction,scholkopf2002learning} are among the most important and fundamental tools in machine learning (ML) for processing nonlinear data.
Like most nonparametric methods, kernel methods have limited applicability in large-scale scenarios, and kernel approximation methods like \emph{Random Fourier features} (RFFs)~\cite{rahimi2007random} and \emph{Nystr\"{o}m method}~\cite{nystrom1930praktische} are frequently introduced as a remedy. 

Even so, kernel methods still lag behind in handling complex high-dimensional data as classic local kernels suffer from the curse of dimensionality~\citep{bengio2005curse}. 
New developments such as neural network Gaussian process (NN-GP) kernels~\cite{lee2017deep} and neural tangent kernels (NTKs)~\cite{jacot2018neural} mitigate this issue by leveraging the inductive bias of neural networks (NNs). %
Yet, these modern kernels are expensive to evaluate and have poor compatibility with standard kernel approximation methods---an accurate approximation with RFFs requires many NN forward passes to construct an adequate number of features, while Nystr\"{o}m method involves costly kernel evaluation in the test phase.

We present a new kernel approximation method for addressing these challenges. 
It relies on deconstructing a kernel as a spectral series and approximating the eigenfunctions with neural networks. 
Because such a method can identify the principal components, we need much less NN forward passes than RFFs to accurately approximate NN-GP kernels and NTKs. 
Moreover, it gives rise to an unsupervised representation learning paradigm, where %
the pairwise similarity captured by kernels is embedded into NNs.  %
The learned neural eigenfunctions can serve as plug-and-play feature extractors and be finetuned for downstream applications.

\begin{figure}[t] %
\vskip 0.2in
\centering
\includegraphics[width=\linewidth]{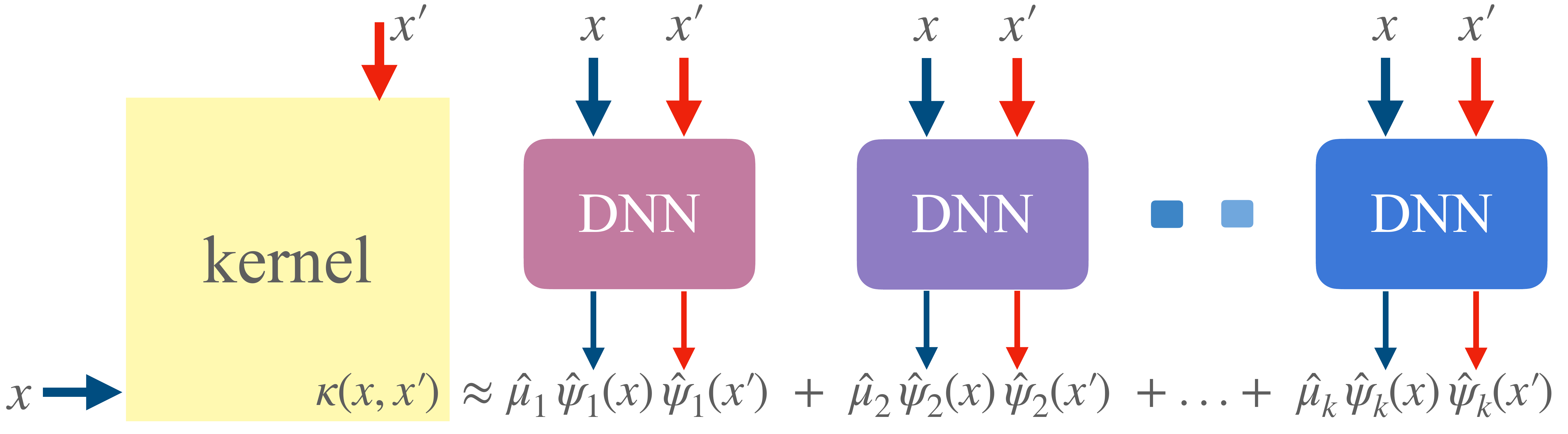}
\caption{\small %
NeuralEF embeds the structures inside a kernel into an ensemble of NNs under the principle of eigendecomposition.
NeuralEF defines a new axis system for data, and is capable of recovering the kernel according to Mercer's theorem.}
\label{fig:fw}
\end{figure}

\begin{figure*}[t]
\centering
    \centering
    \includegraphics[width=\linewidth]{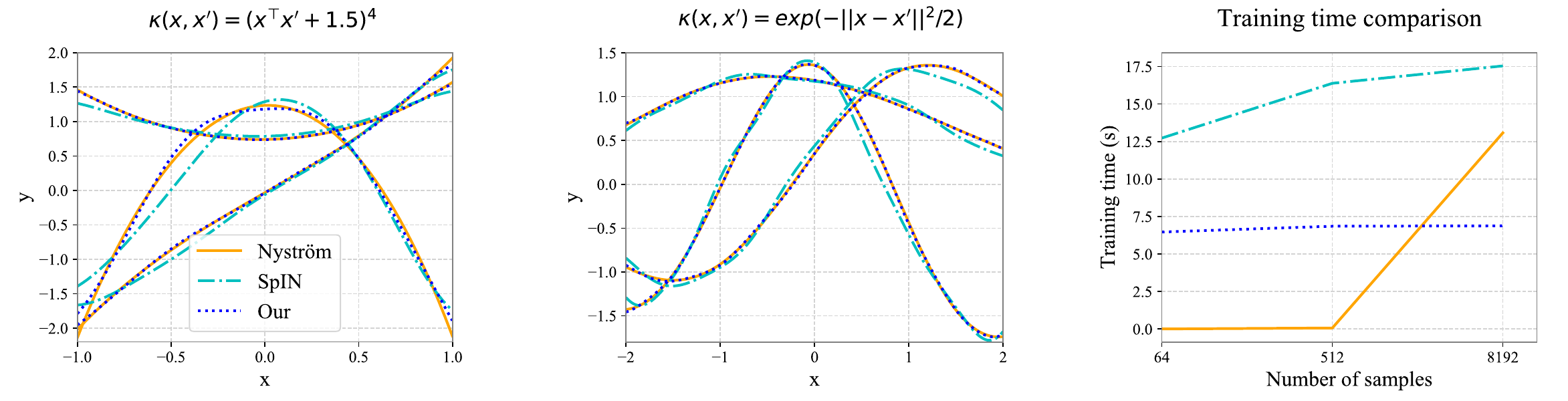}
        \vspace{-3.ex}
\caption{\footnotesize Estimate the top-$10$ eigenfunctions $\{\hat{\psi}_j\}_{j=1}^{10}$ (we plot only the top-$3$ for visualization) of the polynomial kernel $\kappa(x,x')=(x^\top x' + 1.5)^4$ (Left) and the RBF kernel $\kappa(x,x')=\exp(-\Vert x - x'\Vert^2 / 2)$ (Middle) with the Nystr\"{o}m method (the ground truth), SpIN, and NeuralEF (our) given $64$ data points. 
The results obtained with larger sample size are deferred to \cref{app:classic}. 
We also plot how the total training time varies w.r.t. sample size for the polynomial kernel (Right). 
The time complexity of Nystr\"om method grows cubically with the sample size due to eigendecomposition. 
Both NeuralEF and SpIN %
consume nearly constant training time for different sample sizes, but
SpIN produces less accurate approximations and is less efficient. 
} 
\label{fig:illu}
\end{figure*}

Training neural networks to approximate eigenfunctions is a difficult task due to the orthonormality constraints. 
Spectral Inference Networks (SpIN)~\cite{pfau2018spectral} made the first attempt by formulating this as an optimization problem. 
Nevertheless, SpIN's objective function only allows recovering the subspace spanned by the top eigenfunctions\footnote{By saying top-$k$ eigenpairs or eigenfunctions, we mean those associated with the top-$k$ eigenvalues. Besides, we assume they are ranked in the descending order of the corresponding eigenvalues without loss of generality.} instead of the eigenfunctions themselves. 
To recover individual eigenfunctions, they %
resort to an expensive orthogonalization step that requires Cholesky decomposition and tracking the NN Jacobian matrix during training. 
As a result, SpIN suffer from inefficiency issues and non-trivial implementation challenges.

We resolve the problems of SpIN with a %
new formulation of kernel eigendecomposition 
 as \emph{simultaneously} solving a series of \emph{asymmetric} optimization problems. 
Our work is inspired from \citet{gemp2020eigengame} %
which uses a related objective to cast principled component analysis as a game, 
and %
can be viewed as an extension of it
to the function space.

With this, we can naturally incorporate $k$ NNs as function approximators to the top-$k$ eigenfunctions of the kernel, and train them under a more amenable objective than SpIN. 
We leverage stochastic optimization strategies to comfortably train these neural eigenfunctions in the big data regime, and develop a novel NN layer to make the neural eigenfunctions fulfill certain constraints. 
We dub our method as \emph{neural eigenfunctions}, or \emph{NeuralEF} for short.%

After training, NeuralEF approximates the kernel evaluation on novel data points with the outputs of the $k$ NNs (see \cref{fig:fw}). 
This brings great benefits when coping with modern kernels whose evaluation incurs heavy burden like NN-GP kernels and NTKs. 
Besides, NeuralEF forms a set of new orthogonal bases for the data and is compatible with various pattern recognition algorithms.

We empirically evaluate NeuralEF in a variety of scenarios. 
We first evidence that NeuralEF can perform as well as the Nystr\"{o}m method and beat SpIN when handling classic kernels, yet with more sustainable resource consumption (see \cref{fig:illu}). 
We then show that NeuralEF can learn the informative structures in NN-GP kernels and NTKs to achieve effective classification and clustering. 
To demonstrate the scalability of NeuralEF, we further conduct a large-scale study to
scale up linearised Laplace approximation %
where the kernel matrix has size $500,000 \times 500,000$. %

\section{Background}

\subsection{Kernel Methods}
Kernel methods project %
data %
into a high-dimensional feature space $\mathcal{H}$ to enable linear manipulation of nonlinear data. 
The subtlety is that we can leverage the ``kernel trick'' to bypass the need
of specifying $\mathcal{H}$ explicitly
---given a positive definite kernel $\kappa: \mathcal{X} \times \mathcal{X} \rightarrow \mathbb{R}$, there exists a feature map $\varphi: \mathcal{X}\rightarrow\mathcal{H}$ such that $\kappa(\vx, \vx') = \langle\varphi(\vx), \varphi(\vx')\rangle_\mathcal{H}$.  %

There is a rich family of kernels.
Classic kernels include the linear kernel, the polynomial kernel, the radial basis function (RBF) kernel, etc. %
However, they may easily fail when processing real-world data like images and texts due to inadequate expressive power and the curse of dimensionality. 
Thereby, various modern kernels which encapsulate the inductive biases of NN architectures have been developed~\cite{wilson2016deep}. 
The NN-GP kernels~\cite{neal1996priors,lee2017deep,garriga2018deep,novak2018bayesian} and NTKs~\cite{jacot2018neural,arora2019exact,du2019graph} are two representative modern \emph{matrix-valued} kernels, as defined below:
\begin{align}
\small
\kappa_\text{NN-GP}(\vx, \vx') &= \mathbb{E}_{\vtheta \sim p(\vtheta)} g(\vx, \vtheta)g(\vx', \vtheta)^\top, \\
\kappa_\text{NTK}(\vx, \vx') &= \mathbb{E}_{\vtheta \sim p(\vtheta)} \partial_{\vtheta}g(\vx, \vtheta) \partial_{\vtheta}g(\vx', \vtheta)^\top,
\end{align}
where $g(\cdot, \vtheta): \mathcal{X} \rightarrow \mathbb{R}^{N_\text{out}}$ denotes a function represented by an NN with weights $\vtheta$, and $p(\vtheta)$ is a layerwise isotropic Gaussian. 
When the NN specifying $g$ has infinite width, $\kappa_\text{NN-GP}$ and $\kappa_\text{NTK}$ have analytical formulae and can be computed recursively 
due to the hierarchical nature of NNs. 
The NTKs are valuable for the theoretical analysis of neural networks, but are used much less frequently in practice than the empirical NTKs defined below:
\begin{align}
\small
\kappa_\text{NTK}(\vx, \vx') &= \partial_{\vtheta}g(\vx, \vtheta) \partial_{\vtheta}g(\vx', \vtheta)^\top.
\end{align}
If there is no misleading, we refer to the empirical NTKs as NTKs in the following.

Despite the non-parametric flexibility, %
kernel methods suffer from inefficiency issues -- the involved computations grow at least quadratically and usually cubically w.r.t. data size. 
Moreover, for NN-GP kernels and NTKs,
writing down their detailed mathematical formulae is non-trivial~\cite{arora2019exact} 
and evaluating them with recursion is both time and memory consuming. 

\subsection{Kernel Approximation}
\label{sec:kernelapprox}
An idea to scale up kernel methods is to approximate the kernel %
with the inner product of some explicit vector representations of the data, i.e., $\kappa(\vx,\vx') \approx \nu(\vx)^\top\nu(\vx')$, where $\nu: \mathcal{X}\rightarrow \mathbb{R}^k$ denotes a mapping function. 
\emph{Random Fourier features} (RFFs)~\cite{rahimi2007random,rahimi2008weighted,yu2016orthogonal,munkhoeva2018quadrature,francis2021major} and \emph{Nystr\"{o}m method}~\cite{nystrom1930praktische,williams2001using} are two popular approaches in this spirit. 
RFFs can easily handle shift-invariant kernels, but may face obstacles when being applied to the others. Besides, RFFs usually entail using a relatively large $k$.

Alternatively, Nystr\"{o}m method finds the eigenfunctions for kernel approximation according to Mercer's theorem:
\begin{equation}
\label{eq:mercer}
\small
\kappa(\vx, \vx') = \sum_{j\geq1}\mu_j \psi_j(\vx)\psi_j(\vx'),  %
\end{equation}
where $\psi_j \in L^2(\mathcal{X}, q)$\footnote{$L^2(\mathcal{X}, q)$ is the space containing all square-integrable functions w.r.t. $q$.} denote the eigenfunctions of the kernel $\kappa(\vx, \vx')$ w.r.t. the probability measure $q$, and $\mu_j \geq 0$ refer to the corresponding eigenvalues.

Typically, the eigenfunctions obey the following equation:
\begin{equation}
\label{eq:eigenfunc}
\small
\int \kappa(\vx, \vx')\psi_j(\vx')q(\vx')d\vx' = \mu_j\psi_j(\vx),\;\forall j \geq 1.
\end{equation}
And they are demanded to be orthonormal under $q$:
\begin{equation}
\small
\label{eq:eigenfunc-c}
\int \psi_i(\vx)\psi_j(\vx)q(\vx)d\vx =\mathbbm{1}[i=j],\;\forall i,j \geq 1.
\end{equation}
Given %
the access to a training set of i.i.d. samples $\mathbf{X}_\text{tr}=\{\vx_1,...,\vx_N\}$ from $q$, the Nystr\"{o}m method approximates the integration in \cref{eq:eigenfunc} by Monte Carlo (MC) estimation, which gives rise to 
\begin{equation}
\small
\label{eq:5}
\frac{1}{N}\sum_{n'=1}^N\kappa(\vx, \vx_{n'})\psi_j(\vx_{n'}) = \mu_j \psi_j(\vx), \forall j \geq 1.
\end{equation}
It is easy to see that we can get the approximate top-$k$ eigenpairs $\{(\hat{\mu}_j, \Scale[0.95]{[\hat{\psi}_j(\vx_1),...,\hat{\psi}_j(\vx_N)]^\top})\}_{j=1}^k$ of the kernel $\kappa$ by eigendecomposing the kernel matrix $\Scale[0.95]{\kappa(\mathbf{X}_\text{tr}, \mathbf{X}_\text{tr})\in\mathbb{R}^{N \times N}}$.
Nystr\"{o}m method plugs them back into \cref{eq:5} for out-of-sample extension:
\begin{equation}
\small
\label{eq:nystrom}
\hat{\psi}_j(\vx) = \frac{1}{N\hat{\mu}_j}\sum_{{n'}=1}^N\kappa(\vx, \vx_{n'})\hat{\psi}_j(\vx_{n'}), \;j \in [k].
\end{equation}
$[k]$ %
denotes set of integers from $1$ to $k$. Then, we have $\kappa(\vx, \vx') \approx \sum_{j=1}^k \hat{\mu}_j \hat{\psi}_j(\vx)\hat{\psi}_j(\vx').$ 

Unlike RFFs, Nystr{\"o}m method can be applied to approximate any positive-definite kernel. 
Yet, it is non-trivial to scale it up. 
On the one hand, the matrix eigendecomposition is costly for even medium sized training data. 
On the other hand, as shown in \cref{eq:nystrom}, evaluating $\hat{\psi}_j$ on a new datum entails evaluating $\kappa$ for $N$ times, which is unaffordable when coping with the modern kernels specified by deep architectures.

\subsection{Spectral Inference Networks (SpIN)}
Kernel approximation with NNs has the potential to ameliorate these pathologies due to their universal approximation capability and parametric nature. 
SpIN~\cite{pfau2018spectral} is a pioneering work in this line. 
In a similar spirit to the Nystr\"{o}m method, SpIN recovers the top eigenfunctions for kernel approximation, yet with NNs.
Concretely, SpIN introduces a vector-valued NN function $\Psi(\cdot, \vw):\mathcal{X}\rightarrow \mathbb{R}^k$ and train it under the following eigendecomposition principle:
\begin{align}
\small
\max_{\vw} \mathrm{Tr}\Big(\iint &\Psi(\vx, \vw)\Psi(\vx', \vw)^\top \kappa(\vx, \vx') q(\vx)q(\vx')d\vx d\vx' \Big) \notag \\
\text{s.t.:}\, &\int \Psi(\vx, \vw)\Psi(\vx, \vw)^\top  q(\vx)d\vx = \mathbf{I}_k,  \label{eq:spin}
\end{align}
where $\mathrm{Tr}$ computes the trace of a matrix and $\mathbf{I}_{k}$ denotes the identity matrix of size $k \times k$; $q$ is the empirical distribution of data.

However, the above objective makes $\Psi$ recover the subspace spanned by the top-$k$ eigenfunctions rather than the top-$k$ eigenfunctions themselves~\cite{pfau2018spectral}. 
Such a difference is subtle yet significant. 
In the extreme case of finding the top-$N$ eigenfunctions given a training set $\mathbf{X}_\text{tr}$ of size $N$, \cref{eq:spin} becomes: 
\begin{equation}
\small
\label{eq:spin1}
\begin{aligned}
&\max_\vw \mathrm{Tr}\left(\Psi(\mathbf{X}_\text{tr}, \vw)^\top \kappa(\mathbf{X}_\text{tr}, \mathbf{X}_\text{tr})\Psi(\mathbf{X}_\text{tr}, \vw)\right) \\
\text{s.t.:}\, &\Psi(\mathbf{X}_\text{tr}, \vw) \in \mathbb{R}^{N \times N}, \, \Psi(\mathbf{X}_\text{tr}, \vw)^\top\Psi(\mathbf{X}_\text{tr}, \vw) = \mathbf{I}_N,
\end{aligned}
\end{equation}
which equals to $\Scale[0.9]{\max_\vw \mathrm{Tr}\left(\Psi(\mathbf{X}_\text{tr}, \vw)\Psi(\mathbf{X}_\text{tr}, \vw)^\top \kappa(\mathbf{X}_\text{tr}, \mathbf{X}_\text{tr})\right)=}$
$\Scale[0.9]{ \mathrm{Tr}\left(\kappa(\mathbf{X}_\text{tr}, \mathbf{X}_\text{tr})\right)}$. 
As the last term shows, the objective is independent of $\vw$, thus it cannot be used to learn all $N$ eigenfunctions. 

To fix this issue, SpIN relies on a gradient masking trick to ensure that $\Psi$ converges to ordered eigenfunctions. 
This solution is expensive as it involves a Cholesky decomposition per training iteration. 
Besides, to debias the stochastic optimization, SpIN involves tracking the exponential moving average (EMA) of the Jacobian $\partial_{\vw}\Sigma_\vw$. 
As a result, SpIN suffers from inefficiency issues (see \cref{fig:illu}) and non-trivial implementation challenges.
The explicit manipulation of Jacobian also precludes SpIN from utilizing very deep architectures. 

\section{Methodology}

\subsection{Eigendecomposition as Maximization Problems}
NeuralEF builds upon a new representation of the eigenpairs of the kernel, which leads to a more amenable learning principle for kernel approximation than those of the Nystr\"{o}m method and SpIN:

\begin{restatable}[Proof in \cref{app:proof}]{thm}{Thm}
\label{theorem:0}
The eigenpairs of the kernel $\kappa(\vx, \vx')$ can be recovered by simultaneously solving the following asymmetric maximization problems:
\vspace{-0.05cm}
\begin{equation}
\vspace{-0.05cm}
\label{eq:obj}
\small
\max_{\hat{\psi}_j} R_{jj} \;\;\textrm{s.t.:}\, C_j = 1, R_{ij} = 0, \forall j \geq 1,~ i\in[j-1],
\end{equation}
where $\hat{\psi}_j \in L^2(\mathcal{X}, q)$ represent the introduced approximate eigenfunctions, and
\begin{align}
\small
    R_{ij} &:=  \iint \hat{\psi}_i(\vx)\kappa(\vx, \vx')\hat{\psi}_j(\vx')q(\vx')q(\vx)d\vx'd\vx, \\
    C_j &:= \int \hat{\psi}_j(\vx)\hat{\psi}_j(\vx)q(\vx)d\vx.
\end{align}
$(R_{jj}, \hat{\psi}_j)$ will converge to the eigenpair associated with $j$-th largest eigenvalue of $\kappa$. 
\end{restatable}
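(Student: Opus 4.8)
The plan is to translate the constrained maximization into operator-theoretic language and then argue by induction on $j$, invoking the variational (Courant--Fischer) characterization of eigenvalues at each level. First I would introduce the integral operator $T_\kappa$ on $L^2(\mathcal{X}, q)$ defined by $(T_\kappa f)(\vx) = \int \kappa(\vx, \vx') f(\vx') q(\vx') d\vx'$. Because $\kappa$ is symmetric and positive definite (and, say, Hilbert--Schmidt so that $T_\kappa$ is compact), $T_\kappa$ is a self-adjoint positive compact operator and thus admits a spectral decomposition with an orthonormal eigenbasis $\{\psi_j\}$ of $L^2(\mathcal{X},q)$ and eigenvalues $\mu_1 \geq \mu_2 \geq \cdots \geq 0$. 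Writing $\langle f, g\rangle_q = \int f(\vx) g(\vx) q(\vx) d\vx$, the quantities in the statement become $R_{ij} = \langle \hat\psi_i, T_\kappa \hat\psi_j\rangle_q$ and $C_j = \|\hat\psi_j\|_q^2$, so each subproblem is a Rayleigh-quotient maximization subject to a unit-norm constraint together with orthogonality-type constraints against the earlier solutions.

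The base case $j=1$ is the pure variational principle: maximizing $R_{11} = \langle \hat\psi_1, T_\kappa \hat\psi_1\rangle_q$ over $\|\hat\psi_1\|_q = 1$ attains the top eigenvalue $\mu_1$ at $\hat\psi_1 = \psi_1$. For the inductive step I would assume $\hat\psi_1, \dots, \hat\psi_{j-1}$ have already been identified with $\psi_1, \dots, \psi_{j-1}$. The key simplification is that self-adjointness turns the coupling constraints into orthogonality: for $i < j$ we have $R_{ij} = \langle T_\kappa \psi_i, \hat\psi_j\rangle_q = \mu_i \langle \psi_i, \hat\psi_j\rangle_q$, so whenever $\mu_i > 0$ the constraint $R_{ij}=0$ is equivalent to $\hat\psi_j \perp_q \psi_i$. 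The feasible set is therefore exactly the unit sphere inside the orthogonal complement of $\mathrm{span}\{\psi_1, \dots, \psi_{j-1}\}$, and maximizing $\langle \hat\psi_j, T_\kappa \hat\psi_j\rangle_q$ over that set returns $\mu_j$ at $\hat\psi_j = \psi_j$ by Courant--Fischer. This closes the induction and gives $(R_{jj}, \hat\psi_j) = (\mu_j, \psi_j)$ for all $j$.

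The main obstacle I anticipate is not the Rayleigh-quotient step but the passage $R_{ij}=0 \Rightarrow \langle \psi_i, \hat\psi_j\rangle_q = 0$, which hinges on $\mu_i > 0$; I would either restrict to the strictly positive part of the spectrum or argue that directions in the kernel of $T_\kappa$ contribute nothing to $R_{jj}$ and can be safely absorbed. Repeated eigenvalues force the usual caveat that eigenfunctions within a degenerate eigenspace are only determined up to an orthonormal rotation, so each $\hat\psi_j$ should be read as a representative of the relevant eigenspace. Finally, since the claim phrases the solutions as arising from \emph{simultaneously} solved problems, I would emphasize that the asymmetric constraint pattern --- $R_{ij}=0$ imposed only for $i < j$ --- is precisely what decouples the hierarchy: the maximizer of the $j$-th problem depends on solutions $1, \dots, j-1$ but never on later ones, so the simultaneous and the sequential (greedy) formulations share the same ordered set of solutions, matching the EigenGame fixed point.
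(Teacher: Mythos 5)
Your proposal is correct and takes essentially the same route as the paper's proof: both argue inductively, convert each constraint $R_{ij}=0$ into orthogonality $\langle \psi_i, \hat{\psi}_j \rangle = 0$ via the identity $R_{ij} = \mu_i \langle \psi_i, \hat{\psi}_j \rangle$ (valid since $\mu_i > 0$), and then maximize the Rayleigh quotient over the unit sphere in the orthogonal complement of $\mathrm{span}\{\psi_1, \dots, \psi_{j-1}\}$. The differences are presentational only --- the paper expands $\hat{\psi}_j$ in the Mercer eigenbasis and derives the maximum by direct computation instead of citing Courant--Fischer, while you are somewhat more explicit about the zero-eigenvalue and degenerate-spectrum caveats that the paper sidesteps by assuming strictly decreasing eigenvalues.
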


Intuitively, $C_j=1$ guarantees the normalization of $\hat{\psi}_j$.
$R_{ij} = 0$ ($\forall i < j$) ensures that $\hat{\psi}_j$ and $\hat{\psi}_i$ are orthogonal, and induces a hierarchy among the problems -- the solutions to the latter problems are restricted in the \emph{orthogonal complement} of the subspace spanned by the solutions to the former ones. 
This way, we bypass the reliance on the troublesome Cholesky decomposition, gradient masking, and manipulation of Jacobian as required by SpIN. %

Conceptually, $R_{jj}$ can be thought of as an extension of the Rayleigh quotient. 
It is then interesting to see that our optimization objective makes an analogy with that of the 
EigenGame~\cite{gemp2020eigengame} for finite-dimensional symmetric matrices.
We clarify that our method is a generalization of EigenGame -- by setting $\kappa$ as the linear kernel $\kappa(\vx,\vx')=\vx^\top \vx'$, our objective forms a function-space generalization of that in EigenGame. %

In the seek of tractability, we slack the constraints on orthogonality in \cref{eq:obj} as penalties and solve:
\begin{equation}
\small
\label{eq:obj-k}
\max_{\hat{\psi}_j} R_{jj} - \sum_{i =1}^{j-1}\frac{R_{ij}^2}{R_{ii}} \;\; \text{s.t.:}\, C_j = 1, \forall j\in[k],
\end{equation}
where we set the penalty coefficients as $\Scale[0.95]{\frac{1}{R_{11}}, ..., \frac{1}{R_{(j-1)(j-1)}}}$ to make the two kinds of forces in the objective on a similar scale, as suggested by EigenGame~\cite{gemp2020eigengame}. 
We consider only the top-$k$ eigenpairs to strike a balance between efficiency and effectiveness.
$k\in \mathbb{N}^{+}$ is a tunable hyper-parameter. 
We will exposit how to handle the normalization constraints later.

Ideally, by \cref{theorem:0}, $\hat{\psi}_j$ would  converge to the ground truth eigenfunction $\psi_j$, but in practice the exact convergence may be unattainable. 
Given this situation, we provide an analysis in \cref{app:analysis} to demonstrate that small errors in the solutions to the former problems would not significantly bias the solutions to the latter problems.

\subsection{Neural Networks as Eigenfunctions}
\label{sec:method}
We opt to optimize over the parametric function class defined by NNs to scale up our method to large data. 
Concretely, we incorporate $k$ NNs with the same architecture but dedicated weights into the optimization in \cref{eq:obj-k}, i.e., $\Scale[0.95]{\hat{\psi}_j(\cdot)=\hat{\psi}(\cdot, \vw_j): \mathcal{X} \rightarrow \mathbb{R}, j\in[k]}$.

\textbf{Mini-batch training}  
We learn the parameters via mini-batch training. 
Given $\Scale[0.95]{\mathbf{X}=\{\vx_b\}_{b=1}^B \subseteq \mathbf{X}_\text{tr}}$ at per iteration,
we approximate $R_{ij}$ by MC integration:
\begin{equation}
\small
\begin{aligned}
\tilde{R}_{ij} &= \sum_{b=1}^B \sum_{b'=1}^B \frac{1}{B^2}\hat{\psi}_i(\vx_{b})\kappa(\vx_b, \vx_{b'})\hat{\psi}_j(\vx_{b'})\\
&= \frac{1}{B^2} {\vect{\hat{\psi}_i}^{\mathbf{X}}{}}^{\top} \vect{\kappa}^{\mathbf{X},\mathbf{X}} \vect{\hat{\psi}_j}^{\mathbf{X}},\\
\end{aligned}
\end{equation}
where $\vect{\hat{\psi}_j}^{\mathbf{X}} = [\hat{\psi}(\vx_1, \vw_j), ..., \hat{\psi}(\vx_B, \vw_j)]^\top \in \mathbb{R}^{B}$ refer to the concatenation of scalar outputs and $\vect{\kappa}^{\mathbf{X},\mathbf{X}} = \kappa(\mathbf{X}, \mathbf{X}) \in \mathbb{R}^{B\times B}$ is the kernel matrix associated with the mini-batch of data. 
We track $\tilde{R}_{jj}$ via EMA to get an estimate of the $j$-th largest eigenvalue ${\mu}_j$. 

\textbf{L2 Batch normalization (L2BN)} 
Then we settle the normalization constraints. %
When doing mini-batch training, we have $\Scale[0.95]{C_j \approx \frac{1}{B} \sum_{b=1}^B \hat{\psi}(\vx_b, \vw_j)\hat{\psi}(\vx_b, \vw_j)}$.
To make $\Scale[0.95]{C_j}$ equal to 1, we should guarantee the outcomes of each neural eigenfunction to have an L2 norm equal to $\Scale[0.88]{\sqrt{B}}$. 
To realise this, we append an L2 batch normalization (L2BN) layer at the end of every involved NN, whose form is:
\begin{equation}
\label{eq:l2bn}
\small
h_b^\text{out} = \frac{h_b^\text{in}}{\sigma}, \; \text{with}\; \sigma = \sqrt{\frac{1}{B}\sum_{b=1}^B {h_b^\text{in}}^{2}}, \;b \in[B].
\end{equation}

$\Scale[0.95]{\{h_b^\text{in}|h_b^\text{in}\in\mathbb{R}\}_{b=1}^B}$ and $\Scale[0.95]{\{h_b^\text{out}|h_b^\text{out}\in\mathbb{R}\}_{b=1}^B}$ denote the batched input and output. 
We apply the transformation in \cref{eq:l2bn} during training, while recording the EMA of $\sigma$ for testing. 
Thereby, we sidestep the barrier of optimizing under explicit normalization constraints. 
Arguably, L2BN also ensures the neural eigenfunctions are square-integrable w.r.t. $q$.

\textbf{The training loss and its gradients}
Given these setups, we arrive at the following training loss:
\begin{equation}
\small
\begin{aligned}
\label{eq:loss}
\min_{\vw_1, ..., \vw_k}\ell = &- \frac{1}{B^2} \sum_{j=1}^k \Bigg( {\vect{\hat{\psi}_j}^{\mathbf{X}}{}}^{\top} \vect{\kappa}^{\mathbf{X},\mathbf{X}} \vect{\hat{\psi}_j}^{\mathbf{X}} \\
&- \sum_{i=1}^{j-1}\frac{\left(\mathrm{sg}({\vect{\hat{\psi}_i}^{\mathbf{X}}{}}^{\top}) \vect{\kappa}^{\mathbf{X},\mathbf{X}} \vect{\hat{\psi}_j}^{\mathbf{X}}\right)^2}{\mathrm{sg}\left({\vect{\hat{\psi}_i}^{\mathbf{X}}{}}^{\top} \vect{\kappa}^{\mathbf{X},\mathbf{X}} \vect{\hat{\psi}_i}^{\mathbf{X}}\right)}\Bigg),
\end{aligned}
\end{equation}
where $\mathrm{sg}$ denotes the \texttt{stop\_gradient} operation.
The gradients of $\ell$ w.r.t. $\vw_j$ are:
\begin{equation}
\label{eq:grad}
\small
\Scale[0.95]{\nabla_{\vw_j} \ell = -\frac{2}{B^2} \vect{\kappa}^{\mathbf{X},\mathbf{X}}\left(\vect{\hat{\psi}_j}^{\mathbf{X}} - \sum_{i =1}^{j-1} \frac{{\vect{\hat{\psi}_i}^{\mathbf{X}}{}}^{\top} \vect{\kappa}^{\mathbf{X},\mathbf{X}} \vect{\hat{\psi}_j}^{\mathbf{X}}}{{\vect{\hat{\psi}_i}^{\mathbf{X}}{}}^{\top} \vect{\kappa}^{\mathbf{X},\mathbf{X}} \vect{\hat{\psi}_i}^{\mathbf{X}}}\vect{\hat{\psi}_i}^{\mathbf{X}}\right) \cdot \partial_{\vw_j} \vect{\hat{\psi}_j}^{\mathbf{X}}}.
\end{equation}
In practice, we analytically compute the first part of the RHS of \cref{eq:grad} and then perform vector-Jacobian product via reverse-mode autodiff.

\textbf{Extension to matrix-valued kernels} So far, we have figured out the manipulation of scalar kernels, and
it is natural to generalize NeuralEF to handle \emph{matrix-valued} kernels, whose outputs are $N_\text{out} \times N_\text{out}$ matrices.
In this regime, a direct solution is to set $\hat{\psi}_j$ as NNs with $N_\text{out}$ output neurons and modify the L2BN layer accordingly.
Then, we have $\vect{\kappa}^{\mathbf{X},\mathbf{X}} \in \mathbb{R}^{BN_\text{out} \times BN_\text{out}}$ and $\vect{\hat{\psi}_j}^{\mathbf{X}}=[\hat{\psi}(\vx_1, \vw_j)^\top, ..., \hat{\psi}(\vx_B, \vw_j)^\top]^\top \in \mathbb{R}^{BN_\text{out}}$. 
The loss $\ell$ is still leveraged to guide training.

\textbf{The algorithm} \cref{algo:1} shows the training procedure.\footnote{It is not indispensable to precompute the big $\vect{\kappa}^{\mathbf{X}_\text{tr},\mathbf{X}_\text{tr}}$ -- we can instead compute the small $\vect{\kappa}^{\mathbf{X},\mathbf{X}}$ at per training iteration.}

\begin{algorithm}[t] 
\caption{\small Find the top-$k$ eigenpairs of a kernel by NeuralEF} 
\label{algo:1}
\small
\begin{algorithmic}[1]
\STATE {\bfseries Input:} Training data $\mathbf{X}_\text{tr}$, kernel $\kappa$, batch size $B$.
\STATE{Initialize NNs $\Scale[0.95]{\hat{\psi}_j(\cdot)=\hat{\psi}(\cdot, \vw_j)}$ and scalars $\Scale[0.95]{\hat{\mu}_j, j\in [k]}$;}
\STATE{Compute the kernel matrix $\Scale[0.95]{\vect{\kappa}^{\mathbf{X}_\text{tr},\mathbf{X}_\text{tr}}=\kappa(\mathbf{X}_\text{tr}, \mathbf{X}_\text{tr})}$;}
\FOR{\emph{iteration}}
    \STATE {Draw a mini-batch $\Scale[0.95]{\mathbf{X} \subseteq \mathbf{X}_\text{tr}}$; retrieve $\Scale[0.95]{\vect{\kappa}^{\mathbf{X},\mathbf{X}}}$ from $\Scale[0.95]{\vect{\kappa}^{\mathbf{X}_\text{tr},\mathbf{X}_\text{tr}}}$;}
    \STATE {Do forward propagation $\Scale[0.95]{\vect{\hat{\psi}_j}^{\mathbf{X}} = \hat{\psi}(\mathbf{X}, \vw_j), j\in[k]}$;}
    \STATE{$\Scale[0.95]{\hat{\mu}_j \leftarrow \mathrm{EMA}(\hat{\mu}_j, \Scale[0.85]{\frac{1}{B^2} {\vect{\hat{\psi}_j}^{\mathbf{X}}{}}^{\top} \vect{\kappa}^{\mathbf{X},\mathbf{X}} \vect{\hat{\psi}_j}^{\mathbf{X}}}), j\in[k]}$;}
    \STATE {Compute $\Scale[0.95]{\nabla_{\vw_j}\ell, j\in[k]}$ by \cref{eq:grad} and do SGD;}
\ENDFOR
\end{algorithmic}
\end{algorithm}

\begin{figure*}[t]
\centering
\begin{subfigure}[b]{0.49\linewidth}
\centering
\includegraphics[width=\linewidth]{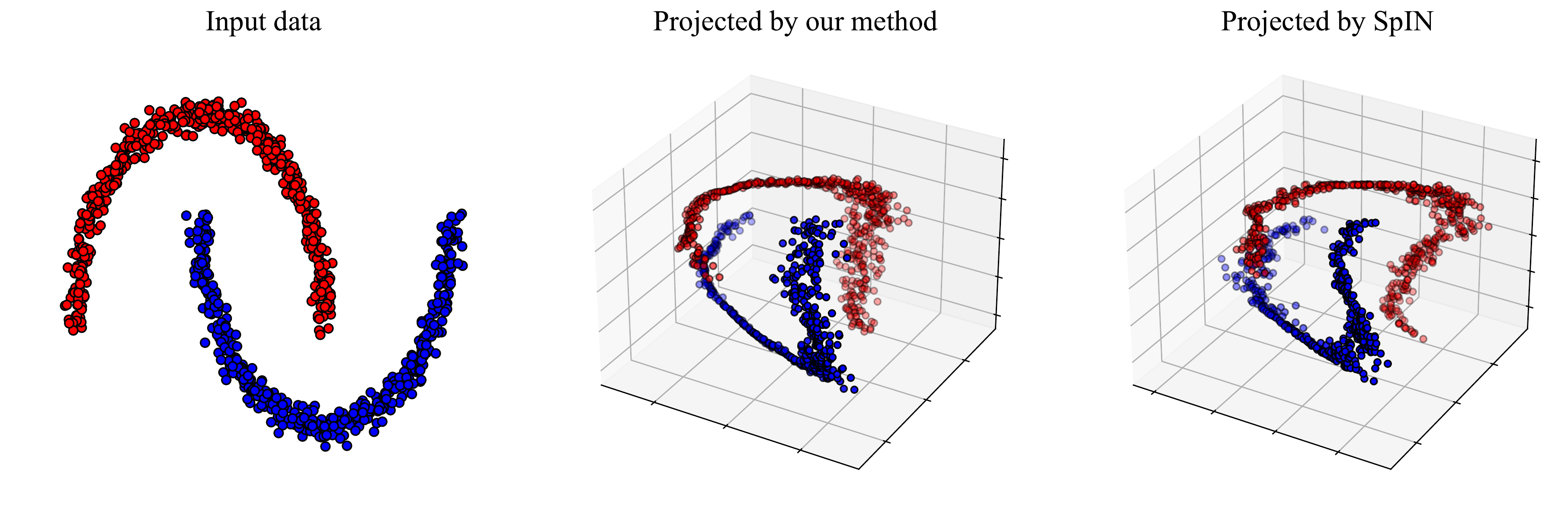}
    \vspace{-3ex}
    \caption{\scriptsize  ``Two-moon'' data}
    \label{fig:nngp-tm}
\end{subfigure}
\begin{subfigure}[b]{0.49\linewidth}
\centering
\includegraphics[width=\linewidth]{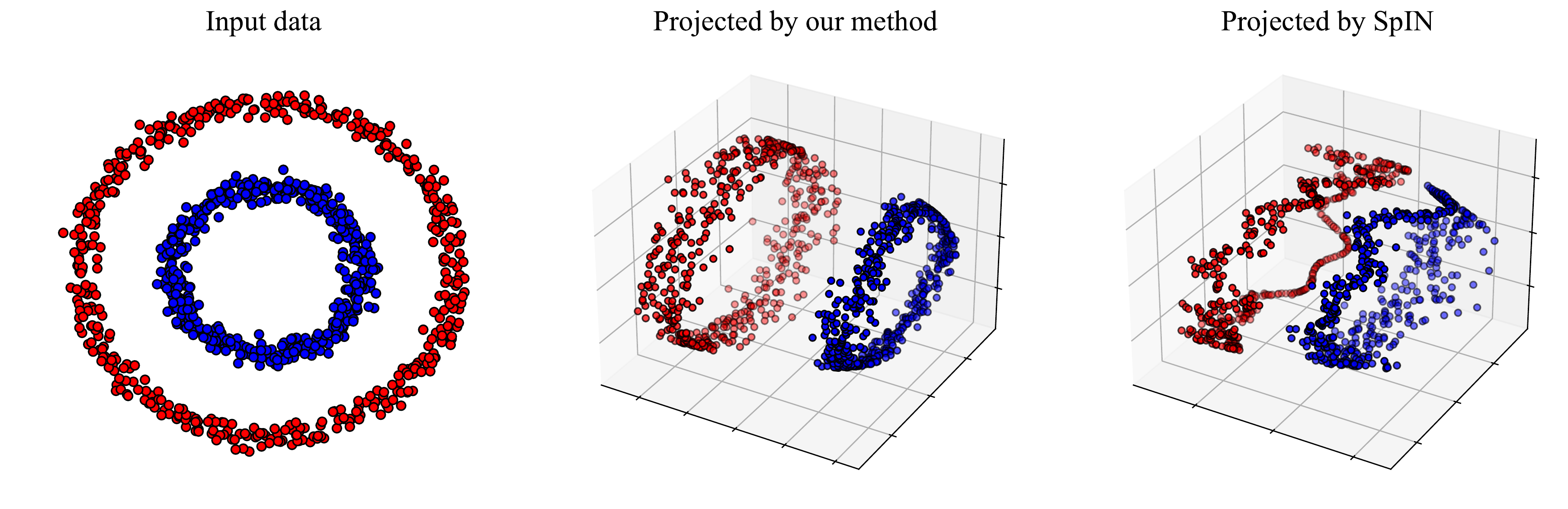}
    \vspace{-3ex}
    \caption{\scriptsize ``Circles'' data}
    \label{fig:nngp-circle}
\end{subfigure}
\caption{\footnotesize Project 2-D data to the 3-D space by the approximate top-$3$ eigenfunctions of the MLP-GP kernels. We use a 3-layer MLP with ReLU activations and a 3-layer MLP with Erf activations to specify the MLP-GP kernels for ``two-moon'' and ``circles'' respectively.}%
\label{fig:nngp}
\end{figure*}

\subsection{Learning with NN-GP kernels and NTKs}
\label{sec:scaleup}
The Nystr\"{o}m method, SpIN, and NeuralEF all entail computing the kernel matrix on training data $\vect{\kappa}^{\mathbf{X}_\text{tr},\mathbf{X}_\text{tr}}$, which may be frustratingly difficult for the NN-GP kernels and NTKs. %
As a workaround, we suggest approximately computing $\vect{\kappa}^{\mathbf{X}_\text{tr},\mathbf{X}_\text{tr}}$ with MC estimation based on \emph{finitely wide} NNs for NN-GP kernels and NTKs instead of performing analytical evaluation (see also \citet{novak2018bayesian}).

Specifically, for the NN-GP kernels, it is easy to see:
\begin{equation}
\small
\vect{\kappa}_\text{NN-GP}^{\mathbf{X}_\text{tr},\mathbf{X}_\text{tr}}  \approx \frac{1}{S}\sum_s^S g(\mathbf{X}_\text{tr}, \vtheta_s)g(\mathbf{X}_\text{tr}, \vtheta_s)^\top, %
\end{equation}
with $g(\mathbf{X}_\text{tr}, \vtheta_s) \in \mathbb{R}^{BN_\text{out}}$ as the concatenation of the vectorized outputs of $g$ and $\vtheta_s \sim p(\vtheta), s \in [S]$. 
In other words, we evaluate a finitely wide NN $g$ on the training data $\mathbf{X}_\text{tr}$ for $S$ times under various weight configurations to calculate the NN-GP training kernel matrix.

For the NTKs, it is hard to explicitly calculate and store the Jacobian matrix for modern NNs. 
Nevertheless, we notice that if there exists a distribution $p(\vv)$ satisfying $\mathbb{E}_{p(\vv)}[\vv\vv^\top] = \mathbf{I}_{\text{dim}(\vtheta)}$\footnote{$\text{dim}(\vtheta)$ denotes the dimensionality of $\vtheta$.}, then
\begin{equation}
\label{eq:ntk-matrix}
\small
\vect{\kappa}_\text{NTK}^{\mathbf{X}_\text{tr},\mathbf{X}_\text{tr}}  = \mathbb{E}_{\vv \sim p(\vv)}\left[\partial_{\vtheta}g(\mathbf{X}_\text{tr}, \vtheta) \vv\right] \left[\partial_{\vtheta}g(\mathbf{X}_\text{tr}, \vtheta) \vv\right]^\top. %
\end{equation}
By Taylor series expansion, we have $\partial_{\vtheta}g(\mathbf{X}_\text{tr}, \vtheta) \vv \approx \left({g(\mathbf{X}_\text{tr}, \vtheta + \epsilon\vv) - g(\mathbf{X}_\text{tr}, \vtheta)}\right)/{\epsilon}$ with $\epsilon$ as a small scalar (e.g., $10^{-5}$).
Combining this with MC estimation, we get
\begin{equation}
\small 
\Scale[0.95]{
\vect{\kappa}_\text{NTK}^{\mathbf{X}_\text{tr},\mathbf{X}_\text{tr}} \approx \frac{1}{S}\sum_{s=1}^S\left[\frac{g(\mathbf{X}_\text{tr}, \vtheta + \epsilon\vv_s) - g(\mathbf{X}_\text{tr}, \vtheta)}{\epsilon}\right]\left[{\frac{g(\mathbf{X}_\text{tr}, \vtheta + \epsilon\vv_s) - g(\mathbf{X}_\text{tr}, \vtheta)}{\epsilon}}\right]^\top} \nonumber
\end{equation}
with $\vv_s \sim p(\vv), s\in[S]$.
Namely, we perform forward propagation for $S+1$ times with the NN $g$ to approximately estimate the NTK training kernel matrix. 
Popular examples of $p(\vv)$ fulfilling the above requirement include the multivariate standard normal $\mathcal{N}(0, \mathbf{I}_{\text{dim}(\vtheta)})$ and the multivariate Rademacher distribution (the uniform distribution over $\{\pm 1\}^{\text{dim}(\vtheta)}$). 
We use the latter in our experiments.

\textbf{Discussion} One may question that now that we can approximate the NN-GP kernels and NTKs via these \emph{random feature} strategies, why do we still need to train a NeuralEF for kernel approximation? 
We clarify that the sample size $S$ for these strategies has to be large enough to realize high-fidelity approximation (e.g., $>1000$), which is unaffordable when testing. 
In contrast, NeuralEF captures the kernel by only several top eigenfunctions, which prominently accelerates the testing (see~\cref{sec:exp-ntk} for empirical evidence).

\section{Experiments}
We apply NeuralEF to a handful of kernel approximation scenarios to demonstrate its promise.
We set batch size $B$ as $256$ and optimize with an Adam~\cite{kingma2014adam} optimizer with $10^{-3}$ learning rate unless otherwise stated. 
The results of SpIN are obtained based on its official code\footnote{\url{https://github.com/deepmind/spectral_inference_networks}.}.
The detailed experiment settings are in \cref{app:exp}.

\subsection{Deconstructing Classic Kernels}
\label{sec:exp-classic}
At first, we experiment on classic kernels. %
The baselines include SpIN and the Nystr\"{o}m method. 
The latter converges to the ground truth solutions as the number of samples increases.
We set $q(x)$ as uniform distributions over $[-1, 1]$ and $[-2, 2]$ for the polynomial kernel $\kappa(x,x')=(x^\top x' + 1.5)^4$ and the RBF kernel $\kappa(x,x')=\exp(-\Vert x - x'\Vert^2 / 2)$, respectively. 
We use $k=10$ multilayer perceptrons (MLPs) of width $32$ to instantiate $\hat{\psi}_j$.
We set batch size $B$ as the dataset size $N$ if $N < 256$ else $256$, and train for $2000$ iterations.

\cref{fig:illu} and \cref{app:classic} display the recovered eigenfunctions under various scales of training data. 
As shown, NeuralEF delivers as good solutions as the Nystr\"{o}m method.
Yet, as more training samples involved, \emph{the training overhead of NeuralEF grows gracefully while the Nystr\"{o}m method suffers from inefficiency issues}.
The results also corroborate the inefficiency issues of SpIN.

\subsection{Deconstructing Modern Kernels}
We then use NeuralEF to deconstruct the NN-GP kernels and the NTKs specified by NNs with a scalar output.%
\subsubsection{The NN-GP Kernels}
\label{sec:exp-nngp}
\textbf{The MLP-GP kernels} We start from a simple problem of processing $1000$ 2-D data from the ``two-moon'' or ``circles'' dataset. 
We consider the NN-GP kernel associated with a 3-layer MLP architecture with ReLU or Erf activation, referred to as MLP-GP kernel for short. 
We utilize the strategies in \cref{sec:scaleup} to comfortably train NeuralEF. 
We are interested in the top-$3$ eigenfunctions for data visualization.
We instantiate the neural eigenfunctions as 3-layer MLPs of width $32$. 
The optimization settings are matched with those in
\cref{sec:exp-classic}.
We train SpIN models under the same settings for a fair comparison. 
After training, we use the approximate eigenfunctions to project the data into the 3-D space for visualization. %
As shown in \cref{fig:nngp}, NeuralEF yields more appealing outcomes than SpIN.

\begin{table}[t]
\centering
\footnotesize
\caption{\footnotesize 
The accuracy of Logistic regression classifiers trained 
on the projections of MNIST images given by the top-$10$ eigenfunctions for various kernels. 
Note that Nystr\"om method does not apply to our CNN-GP kernel because no analytical form of the kernel is known when max-pooling layers are used.
We have tuned the hyper-parameters for the polynomial and RBF kernels extensively but have not tuned those for the CNN-GP kernel.
}
  \begin{tabular}{
    >{\raggedright\arraybackslash}p{36ex}%
    >{\raggedleft\arraybackslash}p{16ex}%
    }%
  \toprule
\multicolumn{1}{c}{Method} & \multicolumn{1}{c}{LR test accuracy} \\%
\midrule
\emph{Our method (CNN-GP kernel)}   & \textbf{84.98}\% \\
\emph{Nystr\"{o}m (CNN-GP kernel)}  & N/A \\
\emph{Nystr\"{o}m (polynomial kernel)}  & 78.00\%\\
\emph{Nystr\"{o}m (RBF kernel)}  & 77.55\%\\
  \bottomrule
   \end{tabular}
\label{table:1}
\end{table}

\begin{figure*}[t]
\centering
\includegraphics[width=0.75\linewidth]{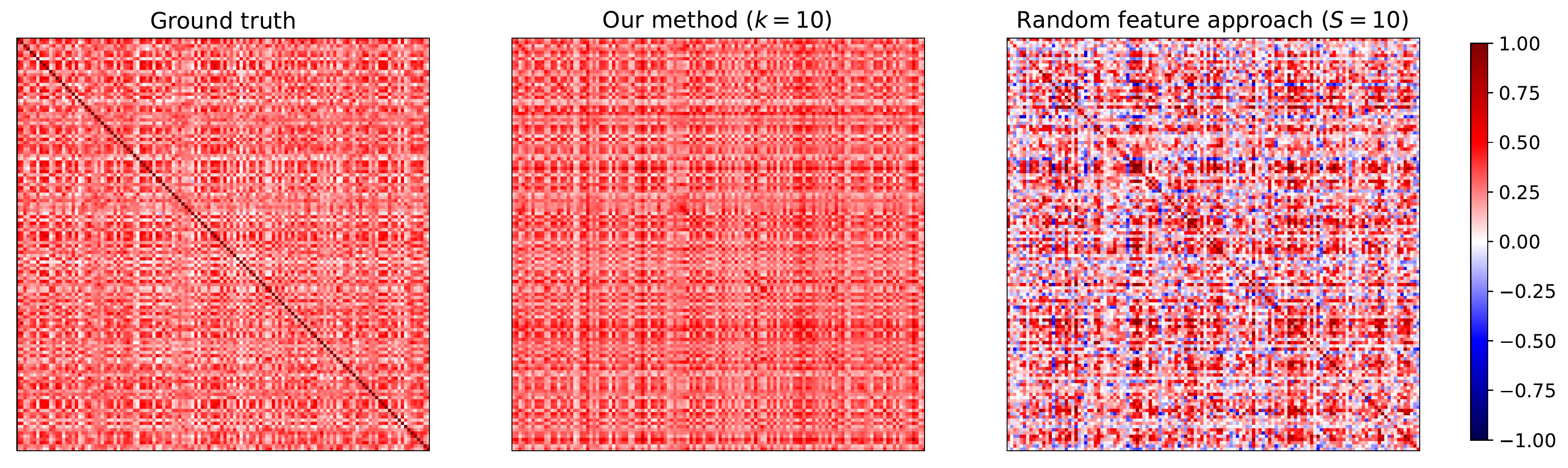}
\caption{\footnotesize %
NeuralEF %
v.s. random features %
for approximating the NTK %
of a ResNet-20 binary classifier. 
The evaluation is performed on $128$ CIFAR-10 test images. 
We explicitly compute the Jacobian matrix to attain the ground truth kernel matrix. 
Testing a datum with NeuralEF entails $k=10$ NNs forward passes. 
We use $S=10$ MC samples in the random feature approach %
such that the two methods have similar overhead.
}
\label{fig:ntk-b}
\vspace{-1ex}
\end{figure*}

\textbf{The CNN-GP kernels} We then consider using the NN-GP kernels specified by convolutional neural networks (CNNs) (dubbed as CNN-GP kernels) to process MNIST images. 
Without loss of generality, we use the following architecture \texttt{Conv5-ReLU-MaxPool2-Conv5-ReLU-MaxPool2} \texttt{-Linear-ReLU-Linear} with \texttt{Conv5} being a $5\times 5$ convolution and \texttt{MaxPool2} being a $2\times 2$ max-pooling. 
Note that the max-pooling operations make the analytical kernel evaluation 
highly nontrivial and no closed-form solution is known~\citep{novak2019neural}, thereby preventing the use of Nystr\"om method. 
Yet, NeuralEF is applicable as it does not hinge on kernelized solutions and the kernel matrix on training data can be efficiently estimated. 
We use $10$ CNNs also with the aforementioned CNN architecture to set up $\Scale[0.9]{\{\hat{\psi}_j\}_{j=1}^{10}}$, and train them on MNIST training images for $20000$ iterations.
After that, we use them to project all MNIST images to the 10-D space. 
We report the test accuracy of the (multiclass) Logistic regression (LR) model trained on these projections in \cref{table:1}.
We compare the results with those obtained by polynomial kernel and RBF kernel using Nystr\"{o}m method. %
We exclude SpIN from the comparison due to nontrivial implementation challenges involved in the application of SpIN to the CNN-GP kernel.  

 \cref{table:1} shows that the CNN-GP kernels are superior to the %
 classic kernels for processing images, and our approach can easily unleash their potential. 
This also %
demonstrates the efficacy of NeuralEF for dealing with \emph{out-of-sample extension}. 
We plot the top-$3$ dimensions of the projections given by NeuralEF in \cref{app:mnist-results}.
We see the projections form class-conditional clusters, implying that NeuralEF can learn the discriminative structures in CNN-GP kernels.

\begin{figure}[t]
\centering
\includegraphics[width=0.7\linewidth]{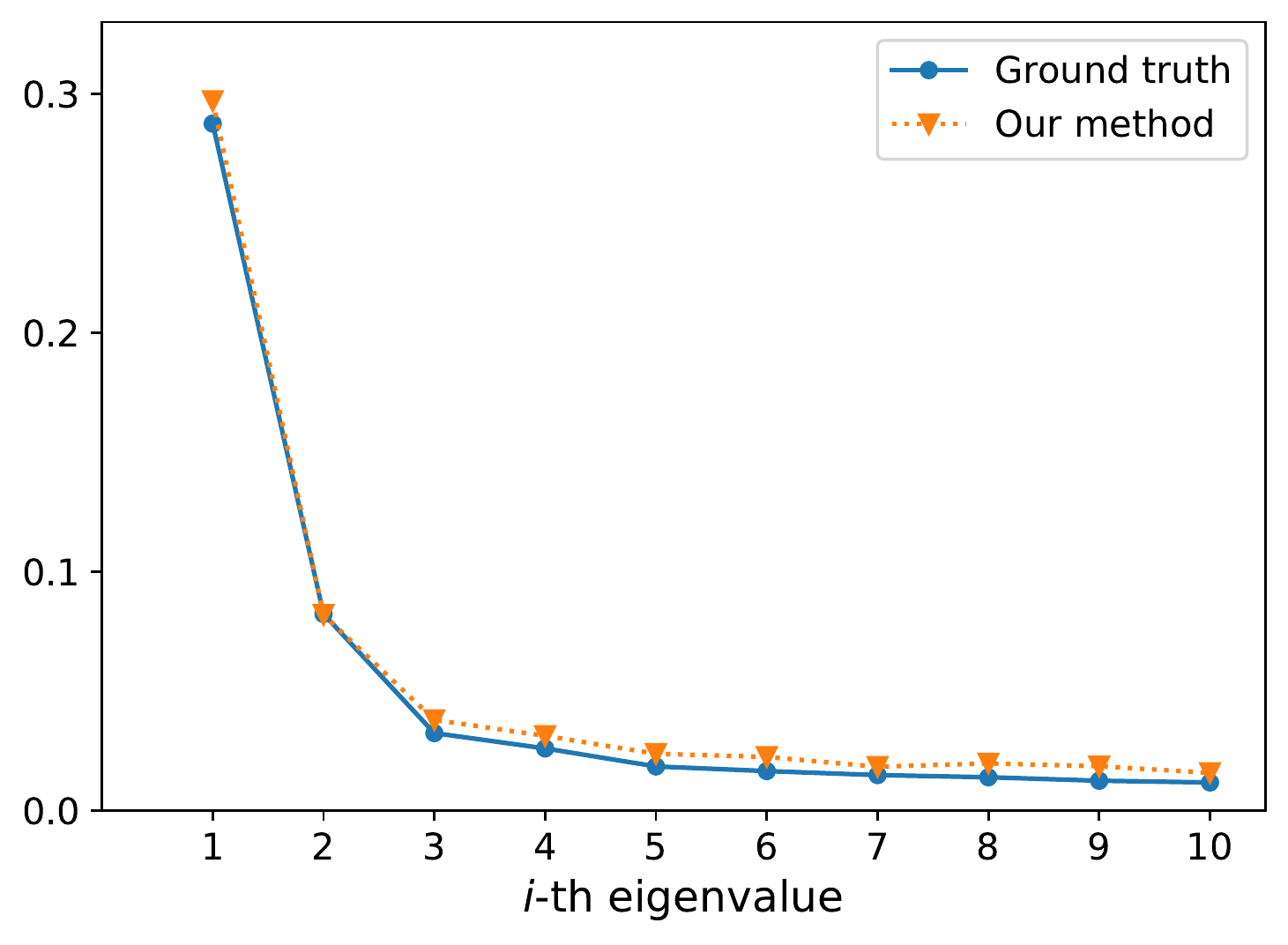}
\vspace{-1ex}
\caption{\footnotesize 
A plot of the eigenvalue  against the index of the eigenvalue for the NTK corresponding to a ResNet-20 binary classifier. 
Our method %
can approximate the top-$10$ ground truth eigenvalues with small errors.
}
\label{fig:ntk-a}
\end{figure}

\subsubsection{The NTKs}
\label{sec:exp-ntk}
We next experiment on the empirical NTKs corresponding to \emph{practically sized} NNs. %
Without loss of generality, we train a ResNet-20 classifier~\cite{he2016deep} to distinguish the airplane images from the automobile ones from CIFAR-10~\cite{krizhevsky2009learning}, and target the NTK associated with the trained binary classifier.
The MC estimation strategy in \cref{sec:scaleup} is used to approximate the training kernel matrix. 
We train a NeuralEF to find the top-$10$ eigenpairs of the NTK. 
Since analytical evaluation of the NTK of ResNet-20 is very time-consuming, we do not include the Nystr\"{o}m method in this experiment.

\cref{fig:ntk-a} displays the discovered top-$10$ eigenvalues and the ground truth ones, which are estimated by naively eigendecomposing the ground truth training kernel matrix. 
Besides, we use the found eigenfunctions to recover the kernel matrix on test data: $\kappa(\vx, \vx') \approx \sum_{j=1}^k \hat{\mu}_j \hat{\psi}_j(\vx)\hat{\psi}_j(\vx').$
The recovered matrix and the ground truth calculated from exact Jacobian are displayed in \cref{fig:ntk-b}. 
We also plot the kernel matrix estimated by the random feature (MC estimation) strategy in \cref{sec:scaleup} with $S=10$ samples.
We can see that: (\RN{1}) NeuralEF %
can approximate the top-$10$ ground truth eigenvalues with small errors; %
(\RN{2}) The kernel matrix recovered by NeuralEF suffers from distortion, perhaps due to that the eigenspectrum of the NTK has a \emph{long tail} and hence capturing only the top-$10$ eigenpairs is insufficient; %
(\RN{3}) Under identical resource consumption, the random feature approach underperforms NeuralEF in aspect of approximation quality, reflecting the importance of capturing eigenpairs; (\RN{4}) This NTK has \emph{low-rank} structures.

Further, we perform K-means clustering on the projections yielded by NeuralEF for the test images, and get \textbf{97.5}\% accuracy. 
As references, the K-means given the random features ($S=10$) yields 81.3\% accuracy, and the K-means given the raw pixels delivers 67.8\% accuracy.
This substantiates that \emph{NTK can characterize the intrinsic similarities between data points} and NeuralEF can approximate NTK in a more efficient way than random features.%

\begin{figure*}[t]
\centering
\begin{subfigure}[b]{0.19\linewidth}
\centering
\includegraphics[width=\linewidth]{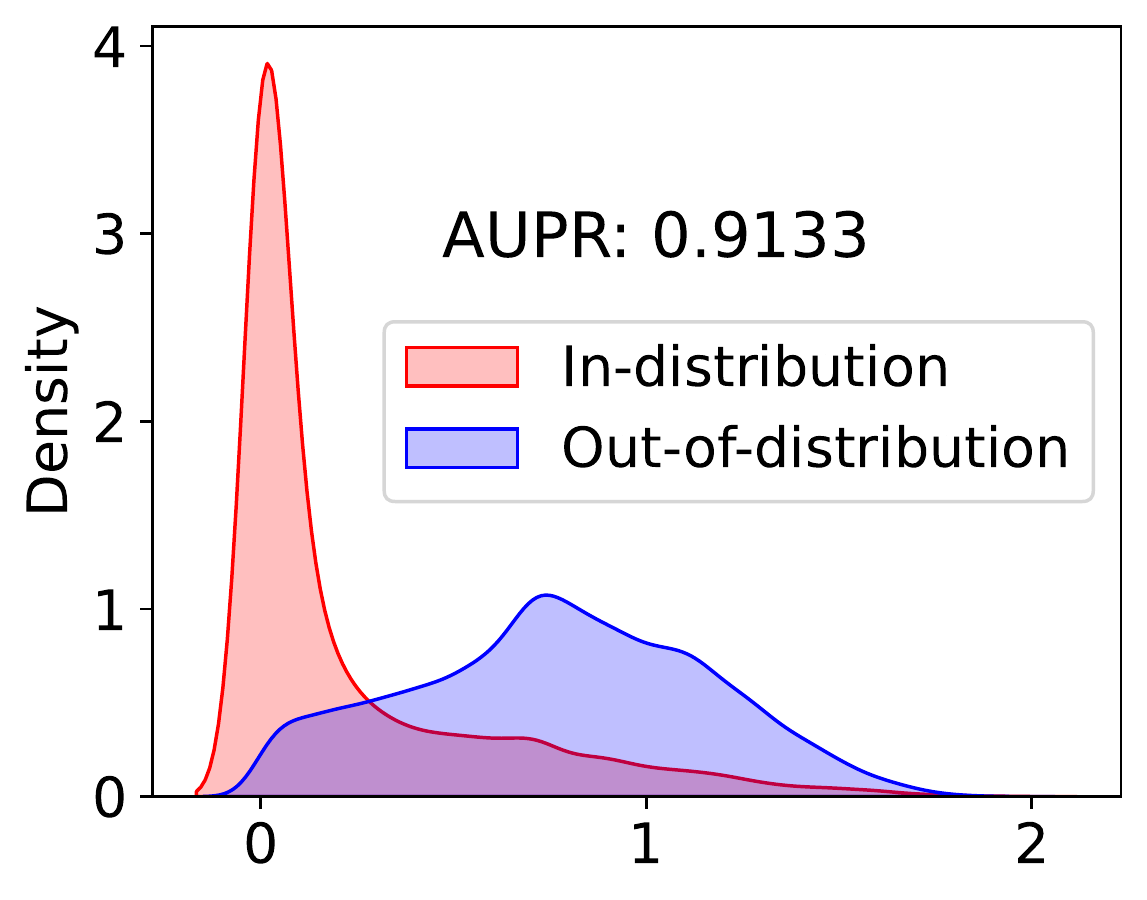}
    \vspace{-3.ex}
    \caption{\scriptsize Ours}
\end{subfigure}
\begin{subfigure}[b]{0.19\linewidth}
\centering
\includegraphics[width=\linewidth]{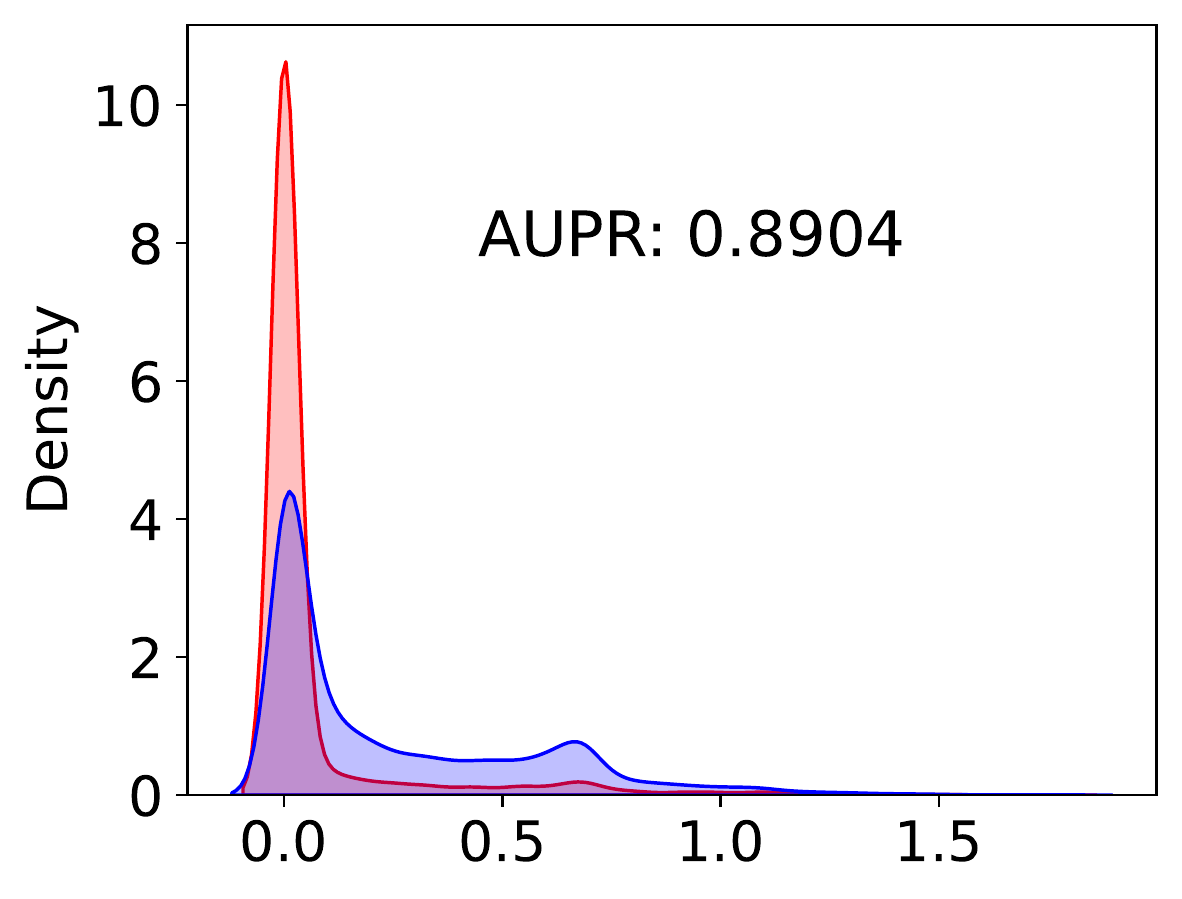}
    \vspace{-3.ex}
    \caption{\scriptsize MAP}
\end{subfigure}
\begin{subfigure}[b]{0.19\linewidth}
\centering
\includegraphics[width=\linewidth]{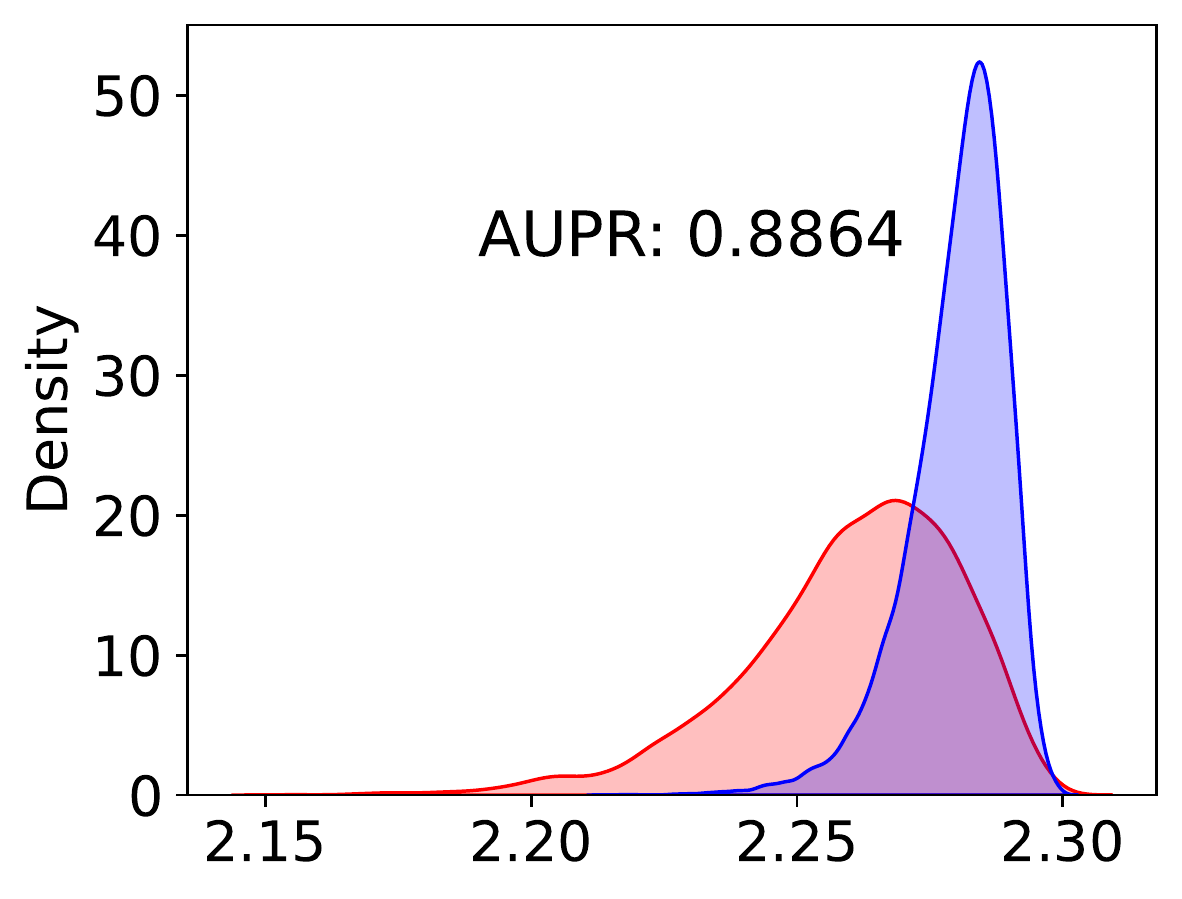}
    \vspace{-3.ex}
    \caption{\scriptsize KFAC LLA}
\end{subfigure}
\begin{subfigure}[b]{0.19\linewidth}
\centering
\includegraphics[width=\linewidth]{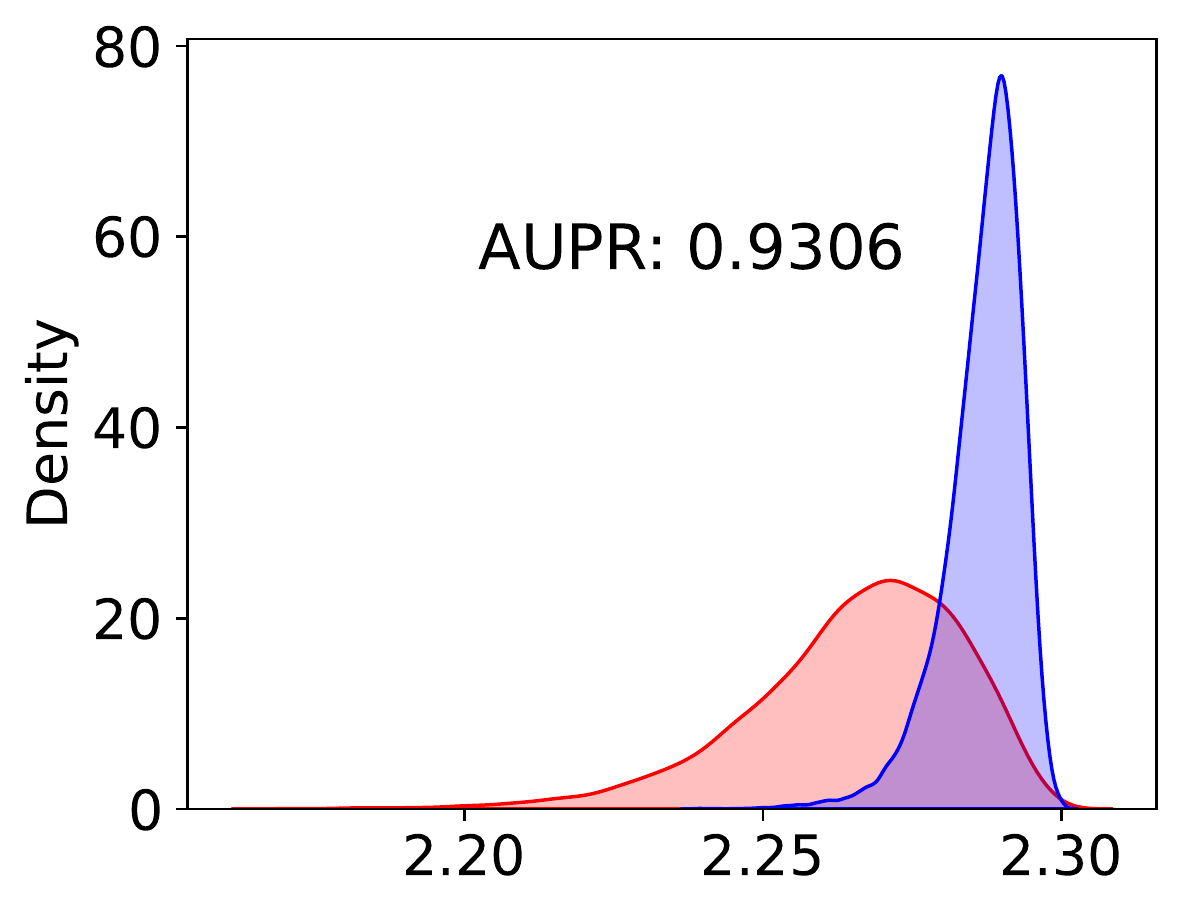}
    \vspace{-3.ex}
    \caption{\scriptsize Diag LLA}
\end{subfigure}
\begin{subfigure}[b]{0.19\linewidth}
\centering
\includegraphics[width=\linewidth]{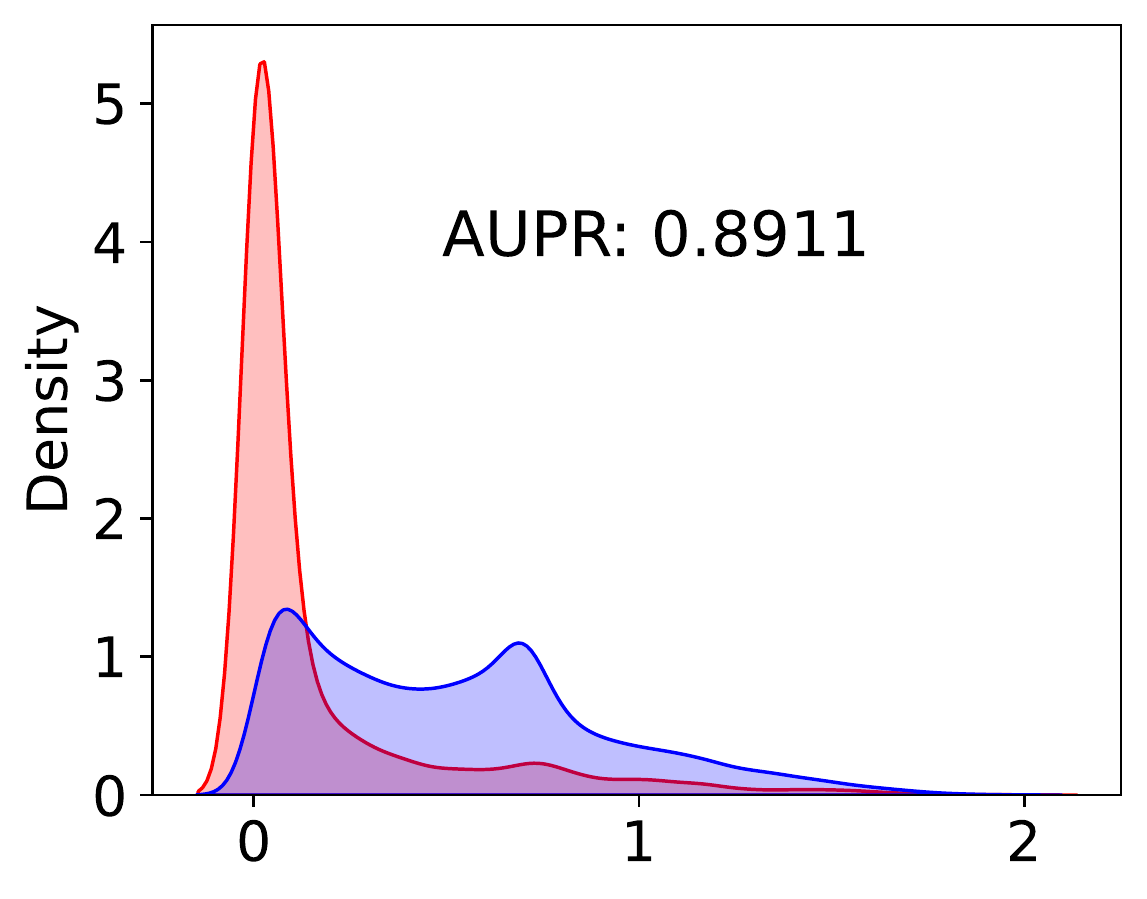}
    \vspace{-3.ex}
    \caption{\scriptsize Last-layer LLA}
\end{subfigure}
\vspace{-1ex}
\caption{\footnotesize The histograms of the uncertainty estimates for in-distribution CIFAR-10 test data and OOD SVHN test data. 
We experiment on ResNet-110 here. 
AUPR refers to the area under the precision-recall curve.}
\label{fig:ntkunc}
\end{figure*}

\subsection{Advanced Application: Scale up Linearised Laplace Approximation}
\label{sec:ntk}
Laplace approximation (LA)~\cite{mackay1992bayesian} is a canonical approach for approximate posterior inference, yet suffers from underfitting~\cite{lawrence2001variational}.
\citet{foong2019between} proposed to linearise the output of the model about the maximum a posteriori (MAP) solution to alleviate this issue (see also \cite{khan2019approximate}).
Specifically, we consider learning an NN model $g(\cdot, \vtheta): \mathcal{X}\rightarrow N_\text{out}$ for data  $\{(\vx_i, \vy_i)\}_{i=1}^N$ under an isotropic Gaussian prior $p(\vtheta)=\mathcal{N}(0, \sigma_0^2\mathbf{I}_{\text{dim}(\vtheta)})$.
The linearised Laplace approximation (LLA) first finds the maximum a posteriori (MAP) solution $\vtheta_\text{MAP}$, and then uses the following Gaussian process to form a function-space approximate posterior:
\begin{equation}
\small
\label{eq:la-naive}
\mathcal{GP}(f | g(\vx, \vtheta_\text{MAP}), \partial_{\vtheta}g(\vx, \vtheta_\text{MAP}) \mathbf{\Sigma}\partial_{\vtheta}g(\vx', \vtheta_\text{MAP})^\top), %
\end{equation}
where $\mathbf{\Sigma}$ is the inversion of 
the Gauss-Newton matrix\footnote{The Gauss-Newton matrix serves as a workaround of the Hessian since that the Hessian is more difficult to estimate. }: $\Scale[0.95]{\mathbf{\Sigma}^{-1}=\sum_i \partial_{\vtheta}g(\vx_i, \vtheta_\text{MAP})^\top \mathbf{\Lambda}_i \partial_{\vtheta}g(\vx_i, \vtheta_\text{MAP}) + 1/\sigma_0^2\mathbf{I}_{\text{dim}(\vtheta)}}$ with $\Scale[0.95]{\mathbf{\Lambda}_i := -\partial_{\vg\vg}^2 \log p(\vy_i|\vg) |_{\vg=g(\vx_i, \vtheta_\text{MAP})} \in \mathbb{R}^{N_\text{out} \times N_\text{out}}}$.

However, like the vanilla Laplace approximation, LLA also suffers from the time-consuming matrix inversion on the matrix of size $\text{dim}(\vtheta) \times \text{dim}(\vtheta)$. 
We next show that NeuralEF can be incorporated to overcome this issue.

The covariance kernel in \cref{eq:la-naive} is closely related to the NTK $\kappa_\text{NTK}(\vx, \vx')=\partial_{\vtheta}g(\vx, \vtheta_\text{MAP}) \partial_{\vtheta}g(\vx', \vtheta_\text{MAP})^\top$. 
Empowered by this observation, we prove that the above GP can be approximated as (see \cref{app:proof-la-our} for the proof):
\begin{equation}
\small
\label{eq:la-our}
\Scale[0.9]{\mathcal{GP}\left(f | g(\vx, \vtheta_\text{MAP}), \tilde{\Psi}(\vx)  \left[\sum_i \tilde{\Psi}(\vx_i)^\top \mathbf{\Lambda}_i \tilde{\Psi}(\vx_i) + \frac{1}{\sigma_0^2}\mathbf{I}_k\right]^{-1} \tilde{\Psi}(\vx')^\top\right)}, %
\end{equation}
where $\Scale[0.95]{\tilde{\Psi}(\vx):=[\sqrt{\hat{\mu}_1}\hat{\psi}_1(\vx), ..., \sqrt{\hat{\mu}_k}\hat{\psi}_k(\vx)] \in \mathbb{R}^{N_\text{out} \times k}}$ with $\Scale[0.95]{\hat{\psi}_i: \mathcal{X}\rightarrow \mathbb{R}^{N_\text{out}}}$ as the approximate multi-output eigenfunction (see \cref{sec:method}) corresponding to the approximate $i$-th largest eigenvalues $\hat{\mu}_i$ of $\kappa_\text{NTK}$. 
With this, we only need to invert a matrix of size $k \times k$ to estimate the covariance.

Then, the whole pipeline for the refined LLA is: we first find $\vtheta_\text{MAP}$, then use NeuralEF to find the top-$k$ (we set $k=10$ in the experiments) eigenpairs of the kernel $\Scale[0.9]{\kappa_\text{NTK}(\vx, \vx')=\partial_{\vtheta}g(\vx, \vtheta_\text{MAP}) \partial_{\vtheta}g(\vx', \vtheta_\text{MAP})^\top}$, then iterate through the training set to compute $\Scale[0.9]{\sum_i \tilde{\Psi}(\vx_i)^\top \mathbf{\Lambda}_i \tilde{\Psi}(\vx_i)}$, and finally obtain the approximate posterior in \cref{eq:la-our}.

We verify on CIFAR-10 using ResNet architectures where $\text{dim}(\vtheta) > 10^{5}$.
Naive LLA fails due to scalability issues, so we take MAP, Kronecker factored LLA (KFAC LLA), diagonal LLA (Diag LLA), and last-layer LLA as baselines. 
We implement the last three baselines based on the \texttt{laplace} library\footnote{\url{https://github.com/AlexImmer/Laplace}.}~\cite{daxberger2021laplace}. 
The training kernel matrix for NeuralEF is of size $5\cdot 10^5 \times 5\cdot 10^5$, but the training only lasts for half a day. 
As these methods exhibit similar test accuracy, we only report the comparison on negative log-likelihood (NLL) and expected calibration error (ECE)~\cite{guo2017calibration} in \cref{table:3}. 
We also depict the histograms of the uncertainty estimates (measured by predictive entropy) for in-distribution and out-of-distribution (OOD) data in \cref{fig:ntkunc}.
As shown, LLA with NeuralEF is consistently better than the baselines in the aspect of model calibration. 
The NLLs and predictive uncertainty of LLA with NeuralEF are also better than or on par with the competitors.

See \cref{sec:sgd} for one more application of NeuralEF where we use it to approximate the implicit kernel induced by the stochastic gradient descent (SGD) trajectory to perform Bayesian deep learning (BDL).

\begin{table}[t]
  \centering
 \footnotesize
 \caption{\small Comparison on test NLL $\downarrow$ and ECE $\downarrow$ on CIFAR-10.}
  \label{table:3}
  \setlength{\tabcolsep}{5.5pt}
  \begin{tabular}{
   >{\raggedright\arraybackslash}p{13.5ex}%
   >{\raggedleft\arraybackslash}p{4ex}%
   >{\raggedleft\arraybackslash}p{4ex}%
   >{\raggedleft\arraybackslash}p{4ex}%
   >{\raggedleft\arraybackslash}p{4ex}%
   >{\raggedleft\arraybackslash}p{4ex}%
   >{\raggedleft\arraybackslash}p{4ex}%
  }
  \toprule
\multicolumn{1}{c}{\multirow{2}{*}{Method}}& \multicolumn{2}{c}{ResNet-20} &\multicolumn{2}{c}{ResNet-56} &\multicolumn{2}{c}{ResNet-110}\\ 
& \multicolumn{1}{c}{NLL} & \multicolumn{1}{c}{ECE}& \multicolumn{1}{c}{NLL} & \multicolumn{1}{c}{ECE}& \multicolumn{1}{c}{NLL} & \multicolumn{1}{c}{ECE}\\
\midrule
\emph{Ours} & \textbf{0.277} & \textbf{0.016}  & \textbf{0.234} & \textbf{0.012} & \textbf{0.241} & \textbf{0.010} \\ %
\emph{MAP} & 0.357 & 0.049  & 0.336 & 0.050 & 0.345 & 0.046\\ %
\emph{KFAC LLA} & 0.906 & 0.468  & 1.576 & 0.707 & 1.767 & 0.749\\ 
\emph{Diag LLA} & 0.934 & 0.480  & 1.606 & 0.712 & 1.797 & 0.754\\ 
\emph{Last-layer LLA} & \textbf{0.264} & 0.026  & \textbf{0.231} & 0.024 & \textbf{0.233} & 0.019\\ 

  \bottomrule
   \end{tabular}
  
\end{table}

\section{Related Work}
Recently, there is ongoing effort to associate specific (Bayesian) NNs with kernel methods or Gaussian processes to gain insights for the theoretical understanding of NNs~\cite{neal1996priors,lee2017deep,garriga2018deep,matthews2018gaussian,novak2018bayesian,jacot2018neural,arora2019exact,khan2019approximate,sun2020neural} or to enrich the family of kernels~\cite{wilson2016deep}.
However, it has been rarely explored how to solve the equally important ``inverse'' problem---designing appropriate NN counterparts for the kernels of interest.
We show in this work that approximating kernels with NNs can be the key to scaling up kernel methods to large data.

{The Nystr\"{o}m method}~\cite{nystrom1930praktische,williams2001using} is a classic kernel approximation method, and has been extended to enable the out-of-sample extension of spectral embedding methods by \citet{bengio2004learning}. 
But as discussed, the Nystr\"{o}m method faces inefficiency issues when handling big data and modern kernels like NTKs. 
Instead, deconstructing kernels by NNs has the potential to ameliorate these pathologies due to the expressiveness and scalability of NNs. 
SpIN is the first work in this spirit~\cite{pfau2018spectral}.
It trains NNs to approximate the top eigenfunctions of the kernel for kernel approximation, but it suffers from an ill-posed objective function
and thereby an %
involved learning procedure.
It is difficult to extend SpIN to treat modern kernels and big data due to the the requirement of Cholesky decomposition and manipulation of Jacobians. 
Relatedly, 
EigenGame~\cite{gemp2020eigengame} identifies a common mistake in literature for interpreting PCA as an optimization problem and proposes ways to fix it. 
It turns out that the same spirit also applies to fixing the SpIN objective function.
In fact, NeuralEF can be viewed as a function-space extension of EigenGame.

\section{Conclusion}
We propose NeuralEF for scalable kernel approximation in this paper. 
During the derivation of the method, we have deepened the connections between kernels and NNs. 
We show the efficacy of NeuralEF and further apply it in several interesting yet challenging scenarios in unsupervised and supervised learning. 
We discuss the limitations and possible future works of NeuralEF below.

\textbf{Limitations} Currently, we represent each eigenfunction with a dedicated NN, so we train $k$ NNs to cover the top-$k$ eigenpairs. This may become costly when we have to use a large $k$ (i.e., the eigenspectrum is long-tail). %
Besides, we empirically observe that NeuralEF has difficulties %
capturing the eigenpairs with relatively small  eigenvalues (e.g., 1\% of the largest eigenvalue) perhaps due to issues in stochastic optimization or %
numerical errors. 
Finally, It is difficult to tune the hyper-parameters of the kernel while using NeuralEF approximations.

\textbf{Future work} To promote parameter efficiency and potentially support a very large $k$, we need to perform \emph{weight-sharing} among the $k$ neural eigenfunctions.
More appealing applications such as using NeuralEF as unsupervised representation learners also deserve future investigation.

\section*{Acknowledgments}
This work was supported by the National Key Research and Development Program of China (Nos. 2020AAA0104304,  2017YFA0700904), NSFC Projects ((Nos. 62061136001, 62106122, 62076147, U19B2034, U1811461, U19A2081)), Tsinghua-Huawei Joint Research Program, a grant from Tsinghua Institute for Guo Qiang, and the High Performance Computing Center, Tsinghua University.

\nocite{langley00}

\bibliography{example_paper}

\begin{thebibliography}{40}
\providecommand{\natexlab}[1]{#1}
\providecommand{\url}[1]{\texttt{#1}}
\expandafter\ifx\csname urlstyle\endcsname\relax
  \providecommand{\doi}[1]{doi: #1}\else
  \providecommand{\doi}{doi: \begingroup \urlstyle{rm}\Url}\fi

\bibitem[Arora et~al.(2019)Arora, Du, Hu, Li, Salakhutdinov, and
  Wang]{arora2019exact}
Arora, S., Du, S.~S., Hu, W., Li, Z., Salakhutdinov, R., and Wang, R.
\newblock On exact computation with an infinitely wide neural net.
\newblock \emph{arXiv preprint arXiv:1904.11955}, 2019.

\bibitem[Bengio et~al.(2004)Bengio, Delalleau, Roux, Paiement, Vincent, and
  Ouimet]{bengio2004learning}
Bengio, Y., Delalleau, O., Roux, N.~L., Paiement, J.-F., Vincent, P., and
  Ouimet, M.
\newblock Learning eigenfunctions links spectral embedding and kernel {PCA}.
\newblock \emph{Neural Computation}, 16\penalty0 (10):\penalty0 2197--2219,
  2004.

\bibitem[Bengio et~al.(2005)Bengio, Delalleau, and Roux]{bengio2005curse}
Bengio, Y., Delalleau, O., and Roux, N.
\newblock The curse of highly variable functions for local kernel machines.
\newblock \emph{Advances in Neural Information Processing Systems}, 18, 2005.

\bibitem[Daxberger et~al.(2021)Daxberger, Kristiadi, Immer, Eschenhagen, Bauer,
  and Hennig]{daxberger2021laplace}
Daxberger, E., Kristiadi, A., Immer, A., Eschenhagen, R., Bauer, M., and
  Hennig, P.
\newblock Laplace redux-effortless {B}ayesian deep learning.
\newblock \emph{Advances in Neural Information Processing Systems}, 34, 2021.

\bibitem[Du et~al.(2019)Du, Hou, Salakhutdinov, Poczos, Wang, and
  Xu]{du2019graph}
Du, S.~S., Hou, K., Salakhutdinov, R.~R., Poczos, B., Wang, R., and Xu, K.
\newblock Graph neural tangent kernel: Fusing graph neural networks with graph
  kernels.
\newblock \emph{Advances in Neural Information Processing Systems},
  32:\penalty0 5723--5733, 2019.

\bibitem[Foong et~al.(2019)Foong, Li, Hern{\'a}ndez-Lobato, and
  Turner]{foong2019between}
Foong, A.~Y., Li, Y., Hern{\'a}ndez-Lobato, J.~M., and Turner, R.~E.
\newblock 'in-between'uncertainty in bayesian neural networks.
\newblock \emph{arXiv preprint arXiv:1906.11537}, 2019.

\bibitem[Francis \& Raimond(2021)Francis and Raimond]{francis2021major}
Francis, D.~P. and Raimond, K.
\newblock Major advancements in kernel function approximation.
\newblock \emph{Artificial Intelligence Review}, 54\penalty0 (2):\penalty0
  843--876, 2021.

\bibitem[Garriga-Alonso et~al.(2018)Garriga-Alonso, Rasmussen, and
  Aitchison]{garriga2018deep}
Garriga-Alonso, A., Rasmussen, C.~E., and Aitchison, L.
\newblock Deep convolutional networks as shallow gaussian processes.
\newblock \emph{arXiv preprint arXiv:1808.05587}, 2018.

\bibitem[Gemp et~al.(2020)Gemp, McWilliams, Vernade, and
  Graepel]{gemp2020eigengame}
Gemp, I., McWilliams, B., Vernade, C., and Graepel, T.
\newblock Eigengame: Pca as a nash equilibrium.
\newblock \emph{arXiv preprint arXiv:2010.00554}, 2020.

\bibitem[Guo et~al.(2017)Guo, Pleiss, Sun, and Weinberger]{guo2017calibration}
Guo, C., Pleiss, G., Sun, Y., and Weinberger, K.~Q.
\newblock On calibration of modern neural networks.
\newblock \emph{arXiv preprint arXiv:1706.04599}, 2017.

\bibitem[He et~al.(2016)He, Zhang, Ren, and Sun]{he2016deep}
He, K., Zhang, X., Ren, S., and Sun, J.
\newblock Deep residual learning for image recognition.
\newblock In \emph{Proceedings of the IEEE Conference on Computer Vision and
  Pattern Recognition}, pp.\  770--778, 2016.

\bibitem[Hendrycks \& Dietterich(2019)Hendrycks and
  Dietterich]{hendrycks2019benchmarking}
Hendrycks, D. and Dietterich, T.
\newblock Benchmarking neural network robustness to common corruptions and
  perturbations.
\newblock \emph{arXiv preprint arXiv:1903.12261}, 2019.

\bibitem[Ioffe \& Szegedy(2015)Ioffe and Szegedy]{ioffe2015batch}
Ioffe, S. and Szegedy, C.
\newblock Batch normalization: Accelerating deep network training by reducing
  internal covariate shift.
\newblock In \emph{International conference on machine learning}, pp.\
  448--456. PMLR, 2015.

\bibitem[Izmailov et~al.(2018)Izmailov, Podoprikhin, Garipov, Vetrov, and
  Wilson]{izmailov2018averaging}
Izmailov, P., Podoprikhin, D., Garipov, T., Vetrov, D., and Wilson, A.~G.
\newblock Averaging weights leads to wider optima and better generalization.
\newblock \emph{arXiv preprint arXiv:1803.05407}, 2018.

\bibitem[Jacot et~al.(2018)Jacot, Gabriel, and Hongler]{jacot2018neural}
Jacot, A., Gabriel, F., and Hongler, C.
\newblock Neural tangent kernel: Convergence and generalization in neural
  networks.
\newblock \emph{arXiv preprint arXiv:1806.07572}, 2018.

\bibitem[Khan et~al.(2019)Khan, Immer, Abedi, and Korzepa]{khan2019approximate}
Khan, M. E.~E., Immer, A., Abedi, E., and Korzepa, M.
\newblock Approximate inference turns deep networks into {G}aussian processes.
\newblock \emph{Advances in Neural Information Processing Systems}, 32, 2019.

\bibitem[Kingma \& Ba(2015)Kingma and Ba]{kingma2014adam}
Kingma, D.~P. and Ba, J.
\newblock Adam: A method for stochastic optimization.
\newblock In \emph{International Conference on Learning Representations}, 2015.

\bibitem[Krizhevsky et~al.(2009)Krizhevsky, Hinton,
  et~al.]{krizhevsky2009learning}
Krizhevsky, A., Hinton, G., et~al.
\newblock Learning multiple layers of features from tiny images.
\newblock 2009.

\bibitem[Langley(2000)]{langley00}
Langley, P.
\newblock Crafting papers on machine learning.
\newblock In Langley, P. (ed.), \emph{Proceedings of the 17th International
  Conference on Machine Learning (ICML 2000)}, pp.\  1207--1216, Stanford, CA,
  2000. Morgan Kaufmann.

\bibitem[Lawrence(2001)]{lawrence2001variational}
Lawrence, N.~D.
\newblock \emph{Variational inference in probabilistic models}.
\newblock PhD thesis, Citeseer, 2001.

\bibitem[Lee et~al.(2017)Lee, Bahri, Novak, Schoenholz, Pennington, and
  Sohl-Dickstein]{lee2017deep}
Lee, J., Bahri, Y., Novak, R., Schoenholz, S.~S., Pennington, J., and
  Sohl-Dickstein, J.
\newblock Deep neural networks as gaussian processes.
\newblock \emph{arXiv preprint arXiv:1711.00165}, 2017.

\bibitem[Mackay(1992)]{mackay1992bayesian}
Mackay, D. J.~C.
\newblock \emph{Bayesian methods for adaptive models}.
\newblock PhD thesis, California Institute of Technology, 1992.

\bibitem[Maddox et~al.(2019)Maddox, Izmailov, Garipov, Vetrov, and
  Wilson]{maddox2019simple}
Maddox, W.~J., Izmailov, P., Garipov, T., Vetrov, D.~P., and Wilson, A.~G.
\newblock A simple baseline for bayesian uncertainty in deep learning.
\newblock In \emph{Advances in Neural Information Processing Systems}, pp.\
  13153--13164, 2019.

\bibitem[Mandt et~al.(2017)Mandt, Hoffman, and Blei]{mandt2017stochastic}
Mandt, S., Hoffman, M.~D., and Blei, D.~M.
\newblock Stochastic gradient descent as approximate {B}ayesian inference.
\newblock \emph{Journal of Machine Learning Research}, 18\penalty0
  (1):\penalty0 4873--4907, 2017.

\bibitem[Matthews et~al.(2018)Matthews, Rowland, Hron, Turner, and
  Ghahramani]{matthews2018gaussian}
Matthews, A. G. d.~G., Rowland, M., Hron, J., Turner, R.~E., and Ghahramani, Z.
\newblock Gaussian process behaviour in wide deep neural networks.
\newblock \emph{arXiv preprint arXiv:1804.11271}, 2018.

\bibitem[Muller et~al.(2001)Muller, Mika, Ratsch, Tsuda, and
  Scholkopf]{muller2001introduction}
Muller, K.-R., Mika, S., Ratsch, G., Tsuda, K., and Scholkopf, B.
\newblock An introduction to kernel-based learning algorithms.
\newblock \emph{IEEE transactions on neural networks}, 12\penalty0
  (2):\penalty0 181--201, 2001.

\bibitem[Munkhoeva et~al.(2018)Munkhoeva, Kapushev, Burnaev, and
  Oseledets]{munkhoeva2018quadrature}
Munkhoeva, M., Kapushev, Y., Burnaev, E., and Oseledets, I.
\newblock Quadrature-based features for kernel approximation.
\newblock \emph{Advances in neural information processing systems}, 31, 2018.

\bibitem[Neal(1996)]{neal1996priors}
Neal, R.~M.
\newblock Priors for infinite networks.
\newblock In \emph{Bayesian Learning for Neural Networks}, pp.\  29--53.
  Springer, 1996.

\bibitem[Novak et~al.(2018)Novak, Xiao, Lee, Bahri, Yang, Hron, Abolafia,
  Pennington, and Sohl-Dickstein]{novak2018bayesian}
Novak, R., Xiao, L., Lee, J., Bahri, Y., Yang, G., Hron, J., Abolafia, D.~A.,
  Pennington, J., and Sohl-Dickstein, J.
\newblock Bayesian deep convolutional networks with many channels are gaussian
  processes.
\newblock \emph{arXiv preprint arXiv:1810.05148}, 2018.

\bibitem[Novak et~al.(2019)Novak, Xiao, Hron, Lee, Alemi, Sohl-Dickstein, and
  Schoenholz]{novak2019neural}
Novak, R., Xiao, L., Hron, J., Lee, J., Alemi, A.~A., Sohl-Dickstein, J., and
  Schoenholz, S.~S.
\newblock Neural tangents: Fast and easy infinite neural networks in python.
\newblock \emph{arXiv preprint arXiv:1912.02803}, 2019.

\bibitem[Nystr{\"o}m(1930)]{nystrom1930praktische}
Nystr{\"o}m, E.~J.
\newblock {\"U}ber die praktische aufl{\"o}sung von integralgleichungen mit
  anwendungen auf randwertaufgaben.
\newblock \emph{Acta Mathematica}, 54:\penalty0 185--204, 1930.

\bibitem[Pfau et~al.(2018)Pfau, Petersen, Agarwal, Barrett, and
  Stachenfeld]{pfau2018spectral}
Pfau, D., Petersen, S., Agarwal, A., Barrett, D.~G., and Stachenfeld, K.~L.
\newblock Spectral inference networks: Unifying deep and spectral learning.
\newblock In \emph{International Conference on Learning Representations}, 2018.

\bibitem[Rahimi \& Recht(2007)Rahimi and Recht]{rahimi2007random}
Rahimi, A. and Recht, B.
\newblock Random features for large-scale kernel machines.
\newblock \emph{Advances in neural information processing systems}, 20, 2007.

\bibitem[Rahimi \& Recht(2008)Rahimi and Recht]{rahimi2008weighted}
Rahimi, A. and Recht, B.
\newblock Weighted sums of random kitchen sinks: Replacing minimization with
  randomization in learning.
\newblock \emph{Advances in neural information processing systems}, 21, 2008.

\bibitem[Sch{\"o}lkopf et~al.(2002)Sch{\"o}lkopf, Smola, Bach,
  et~al.]{scholkopf2002learning}
Sch{\"o}lkopf, B., Smola, A.~J., Bach, F., et~al.
\newblock \emph{Learning with kernels: support vector machines, regularization,
  optimization, and beyond}.
\newblock MIT press, 2002.

\bibitem[Sun et~al.(2020)Sun, Shi, and Grosse]{sun2020neural}
Sun, S., Shi, J., and Grosse, R.~B.
\newblock Neural networks as inter-domain inducing points.
\newblock In \emph{Third Symposium on Advances in Approximate Bayesian
  Inference}, 2020.

\bibitem[Williams \& Seeger(2001)Williams and Seeger]{williams2001using}
Williams, C. and Seeger, M.
\newblock Using the nystr{\"o}m method to speed up kernel machines.
\newblock In \emph{Proceedings of the 14th annual conference on neural
  information processing systems}, number CONF, pp.\  682--688, 2001.

\bibitem[Wilson et~al.(2016)Wilson, Hu, Salakhutdinov, and
  Xing]{wilson2016deep}
Wilson, A.~G., Hu, Z., Salakhutdinov, R., and Xing, E.~P.
\newblock Deep kernel learning.
\newblock In \emph{Artificial intelligence and statistics}, pp.\  370--378.
  PMLR, 2016.

\bibitem[Woodbury(1950)]{woodbury1950inverting}
Woodbury, M.~A.
\newblock \emph{Inverting modified matrices}.
\newblock Statistical Research Group, 1950.

\bibitem[Yu et~al.(2016)Yu, Suresh, Choromanski, Holtmann-Rice, and
  Kumar]{yu2016orthogonal}
Yu, F. X.~X., Suresh, A.~T., Choromanski, K.~M., Holtmann-Rice, D.~N., and
  Kumar, S.
\newblock Orthogonal random features.
\newblock \emph{Advances in neural information processing systems}, 29, 2016.

\end{thebibliography}
\bibliographystyle{icml2022}

\newpage
\appendix
\onecolumn
\section{Proof}
\subsection{Proof of \cref{theorem:0}}\label{app:proof}
\Thm*
\begin{proof}

First of all, it is easy to see that the functions in $L^2(\mathcal{X}, q)$ form an inner product space with inner product given by:
\begin{equation*}
\small
\langle \varphi, \varphi' \rangle = \int \varphi(\vx)\varphi'(\vx)q(\vx)d\vx, \;\; \forall \varphi, \varphi' \in L^2(\mathcal{X}, q).
\end{equation*}

By Mercer's theorem, we have $\kappa(\vx, \vx') = \sum_{j\geq1}\mu_j \psi_j(\vx)\psi_j(\vx')$ where $(\mu_j, \psi_j)$ refers to the $j$-th ground truth eigenpair of $\kappa$. Without loss of generality, we assume $\forall i < j$: $\mu_i > \mu_j$.

It is easy to reformulate the problems in \cref{eq:obj} as:
\begin{equation*}
\small
\begin{aligned}
\max_{\hat{\psi}_1} R_{11} &\;\; \text{s.t.:}\, C_1 = 1 \\
\max_{\hat{\psi}_2} R_{22} &\;\; \text{s.t.:}\, C_2 = 1, R_{12} = 0 \\
\max_{\hat{\psi}_3} R_{33} &\;\; \text{s.t.:}\, C_3 = 1, R_{13} = 0, R_{23} = 0\\
&...
\end{aligned}
\end{equation*}
Interestingly, when simultaneously solving these problems, the solution to the $j$-th problem only depends on the those to the preceding problems and is independent of those to the following problems. 
Therefore, we can derive the solutions to the problems in a sequencial manner.

Specifically, we first consider the maximization objective in the first problem:
\begin{equation*}
\small
\begin{aligned}
R_{11} &= \iint \hat{\psi}_1(\vx)\kappa(\vx, \vx')\hat{\psi}_1(\vx')q(\vx')q(\vx)d\vx'd\vx \\
&=\iint \hat{\psi}_1(\vx)\left(\sum_{j\geq1}\mu_j \psi_j(\vx)\psi_j(\vx')\right)\hat{\psi}_1(\vx')q(\vx')q(\vx)d\vx'd\vx \\
&=\sum_{j\geq1} \mu_j \iint \hat{\psi}_1(\vx) \psi_j(\vx)\psi_j(\vx')\hat{\psi}_1(\vx')q(\vx')q(\vx)d\vx'd\vx \\
&=\sum_{j\geq1} \mu_j \int\hat{\psi}_1(\vx) \psi_j(\vx)q(\vx)d\vx \int\hat{\psi}_1(\vx') \psi_j(\vx')q(\vx')d\vx'\\
&=\sum_{j\geq1} \mu_j \langle \hat{\psi}_1, \psi_j \rangle^2.
\end{aligned}
\end{equation*}

Given the definition that the eigenfunctions $\{\psi_j\}_{j\geq1}$ are orthonormal (see \cref{eq:eigenfunc-c}), we know they form a set of orthonormal bases of the $L^2(\mathcal{X}, q)$ space. 
Thus, we can represent $\hat{\psi}_1$ in such a new axis system by coordinate $(w_1, w_2,...)$:
\begin{equation*}
\small
\hat{\psi}_1 = \sum_{i\geq1} w_i\psi_i.
\end{equation*}

We can then rewrite the maximization objective as:
\begin{equation*}
\small
R_{11} = \sum_{j\geq1} \mu_j \langle \hat{\psi}_1, \psi_j \rangle^2 = \sum_{j\geq1} \mu_j \langle \sum_{i\geq1} w_i\psi_i, \psi_j \rangle^2 = \sum_{j\geq1} \mu_j w_j^2.
\end{equation*}

Recalling the constraint $C_1 = 1$, we have 
\begin{equation*}
\small
\langle \hat{\psi}_1, \hat{\psi}_1 \rangle = \langle \sum_{i\geq1} w_i\psi_i, \sum_{j\geq1} w_j\psi_j \rangle = \sum_{i,j\geq1} w_i w_j\langle \psi_i, \psi_j \rangle = \sum_{j\geq1} w_j^2 = 1.
\end{equation*}

Then, it is straight-forward to see the maximum value of $R_{11}$ is the largest ground truth eigenvalue $\mu_1$.
The condition to make the maximization hold is that $(w_1, w_2,...)$ is a one-hot vector with the first element as $1$, namely, $\hat{\psi}_1=\psi_1$.  
Thereby, we prove that solving the first problem uncovers the first eigenvalue as well as the associated eigenfunction of the kernel $\kappa$. 

We then consider solving the second problem given $\hat{\psi}_1=\psi_1$. 
Compared to the first problem, there is one more constraint:
\begin{align*}
\small 
R_{12} &= 0\\
\Rightarrow\;\; \iint \hat{\psi}_1(\vx)\kappa(\vx, \vx')\hat{\psi}_2&(\vx')q(\vx')q(\vx)d\vx'd\vx = 0 \\
\Rightarrow\;\; \int \hat{\psi}_2(\vx')q(\vx')\int \hat{\psi}_1&(\vx) \kappa(\vx, \vx')q(\vx)d\vx d\vx' = 0 \\
\Rightarrow\;\; \int \hat{\psi}_2(\vx')q(\vx')\int \psi_1&(\vx) \kappa(\vx, \vx')q(\vx)d\vx d\vx' = 0 \\
\Rightarrow\;\; \int \hat{\psi}_2(\vx')q(&\vx')\mu_1 \psi_1(\vx') d\vx' = 0 \\
\Rightarrow\;\; \langle \psi_1, &\hat{\psi}_2 \rangle = 0. \\
\end{align*}
Namely, $\hat{\psi}_2$ is constrained in the orthogonal complement of the subspace spanned by $\psi_1$. 
Given such a minor difference between the second problem and the first problem, we can apply an analysis similar to that for the first problem to solve the second problem. 
Note that $\mu_2$ is the largest eigenvalue in the orthogonal complement of the subspace spanned by $\psi_1$, $(R_{22}, \hat{\psi}_2)$ would hence converge to the $2$-th largest eigenvalue and the associated eigenfunction of $\kappa$.

Applying this procedure incrementally to the additional problems then finishes the proof. 
\end{proof}

\subsection{Justification of NeuralEF in Practice}
\label{app:analysis}
Though, ideally, $\hat{\psi}_j$ would converge to the ground truth
eigenfunction ${\psi}_j$ by \cref{theorem:0}, in practice, when performing optimization according to \cref{eq:obj-k} we cannot guarantee that at the convergence $\hat{\psi}_j$ is exactly equivalent to ${\psi}_j$. 
May the small deviation in the solutions to the preceding problems significantly bias the solutions to the following problems?
We show that it is not the case below.

To simplify the analysis, we consider only the first two problems in \cref{eq:obj-k}, namely, learning the first two approximate eigenfunctions $\hat{\psi}_1$ and $\hat{\psi}_2$.
As done in the proof of \cref{theorem:0}, we represent $\hat{\psi}_1$ and $\hat{\psi}_2$ in the axis system specified by $\{\psi_j\}_{j\geq1}$:
\begin{equation*}
\small
\hat{\psi}_1 = \sum_{i\geq1} v_i\psi_i \;\;\; \hat{\psi}_2 = \sum_{i\geq1} w_i\psi_i.
\end{equation*}

At first, we assume that  solving $\max_{\hat{\psi}_1} R_{11}\; \text{s.t.:}\, C_1=1$, whose optima is ${\psi}_1$, leads to that $\Vert \hat{\psi}_1 - {\psi}_1 \Vert^2 < 2\epsilon_1 < 2$, where $\Vert \cdot \Vert$ refers to the norm induced by the inner product in $L^2(\mathcal{X}, q)$. Then,
\begin{equation*}
\small
\begin{aligned}
\Vert \hat{\psi}_1 - {\psi}_1 \Vert^2 < 2\epsilon_1 \;\Rightarrow\; 2 - 2\langle\hat{\psi}_1, {\psi}_1\rangle < 2\epsilon_1 \;\Rightarrow\; 2 - 2v_1 < 2\epsilon_1 \;\Rightarrow\; v_1 > 1 - \epsilon_1. 
\end{aligned}
\end{equation*}

As $\hat{\psi}_1$ is normalized, we have:
\begin{equation*}
\small
\begin{aligned}
v_1^2 + v_2^2 \leq 1 \;\Rightarrow\; v_2^2 \leq 1 - v_1^2 \leq 1 - (1 - \epsilon_1)^2 = 2\epsilon_1 - \epsilon_1^2 \;\Rightarrow\; |v_2| < \sqrt{2\epsilon_1 - \epsilon_1^2}. 
\end{aligned}
\end{equation*}

We then assume that solving $\max_{\hat{\psi}_2} -\frac{R_{12}^2}{R_{11}}$ results in that $|R_{12}| < \epsilon_2 < 1$.
Given that
\begin{equation*}
\small
\begin{aligned}
R_{12} &= \iint \hat{\psi}_1(\vx)\kappa(\vx, \vx')\hat{\psi}_2(\vx')q(\vx')q(\vx)d\vx'd\vx \\
&= \iint \left(\sum_{i\geq 1} v_i {\psi}_i(\vx)\right) \left(\sum_{j\geq1}\mu_j \psi_j(\vx)\psi_j(\vx')\right) \hat{\psi}_2(\vx')q(\vx')q(\vx)d\vx'd\vx \\
&= \sum_{i\geq 1}v_i \sum_{j\geq 1}\mu_j \int {\psi}_i(\vx)\psi_j(\vx)q(\vx)d\vx \int \psi_j(\vx') \hat{\psi}_2(\vx')q(\vx')d\vx' \\
&= \sum_{i\geq 1}v_i \sum_{j\geq 1}\mu_j \mathbbm{1}[i=j] \langle\psi_j, \hat{\psi}_2\rangle \\
&= \sum_{i\geq 1}v_i \mu_i  \langle\psi_i, \hat{\psi}_2\rangle\\
&= \sum_{i\geq 1}v_i \mu_i  w_i, \\
\end{aligned}
\end{equation*}
we have $|\sum_{i\geq 1}v_i \mu_i  w_i|< \epsilon_2$.

We consider the optima $\hat{\psi}^*_2$ of the problem $\max_{\hat{\psi}_2} R_{22}-\frac{R_{12}^2}{R_{11}} \; \text{s.t.:}\, C_2=1$, which equals to 
\begin{equation*}
\small
\max_{\{w_j\}_{j\geq 1}} \sum_{j\geq1} \mu_j w_j^2 \;\text{s.t.:} \, |\sum_{i\geq 1}v_i \mu_i  w_i|< \epsilon_2,\, \sum_{j\geq 1} w_j^2 = 1.
\end{equation*}

It is easy to see at the maximum, $w_i^2 = 0, \, \forall \, i > 2$.
Namely, $\hat{\psi}^*_2=w_1 \psi_1 + w_2 \psi_2$ with $w_1^2 + w_2^2 = 1$.

Without loss of generality, we assume $w_2 > 0$ and  measure the distance between $\hat{\psi}^*_2$ and  $\psi_2$ to estimate the induced bias:\footnote{It $w_2 < 0$, we measure the distance between $\hat{\psi}^*_2$ and  $-\psi_2$, and the consequence is the same.}
\begin{equation*}
\small
\Vert \hat{\psi}^*_2 - {\psi}_2 \Vert^2 = 2 - 2\langle \hat{\psi}^*_2, {\psi}_2 \rangle = 2 - 2w_2 = 2 - 2\sqrt{1 - w_1^2} \leq 2 - 2(1- w_1^2) = 2 w_1^2.
\end{equation*}

Recall that $|v_1 \mu_1  w_1 + v_2\mu_2 w_2|< \epsilon_2$, $v_1 > 1 - \epsilon_1$, and $|v_2| < \sqrt{2\epsilon_1 - \epsilon_1^2}$, then 

1) when $w_1 > 0$:
\begin{equation*}
\small
(1-\epsilon_1)\mu_1 w_1 < v_1\mu_1w_1 < \epsilon_2 - v_2 \mu_2 w_2 \leq \epsilon_2 + \sqrt{2\epsilon_1 - \epsilon_1^2} \mu_2,
\end{equation*}
then $w_1 < \frac{1}{(1-\epsilon_1)\mu_1} (\epsilon_2 + \sqrt{2\epsilon_1 - \epsilon_1^2} \mu_2)$;

2) when $w_1 < 0$:
\begin{equation*}
\small
(1-\epsilon_1)\mu_1 w_1 > v_1\mu_1w_1 > - \epsilon_2 - v_2 \mu_2 w_2 \geq - \epsilon_2 - \sqrt{2\epsilon_1 - \epsilon_1^2} \mu_2,
\end{equation*}
then $w_1 > \frac{1}{(1-\epsilon_1)\mu_1} (-\epsilon_2 - \sqrt{2\epsilon_1 - \epsilon_1^2} \mu_2)$.

Therefore, $|w_1|$ is small if $\Vert \hat{\psi}_1 - {\psi}_1 \Vert^2$ and $|R_{12}|$ are small, and in turn $\Vert \hat{\psi}^*_2 - {\psi}_2 \Vert^2=2w_1^2$ is small. 

Applying this procedure incrementally to the additional problems then finishes the whole justification.

\subsection{Proof of \cref{eq:la-our}}
\label{app:proof-la-our}
\begin{proof}
We denote $\partial_{\vtheta}g(\vx, \vtheta_\text{MAP})$ as $\mathbf{J}^*_\vx$ for compactness. 
We concatenate $\{\mathbf{J}^*_{\vx_i} \in \mathbb{R}^{N_\text{out} \times \text{dim}(\vtheta)}\}_{i=1}^N$ as a big matrix $\mathbf{J}^*_{\mathbf{X}_\text{tr}} \in \mathbb{R}^{NN_\text{out} \times \text{dim}(\vtheta)}$, and organize $\{\mathbf{\Lambda}_i \in \mathbb{R}^{N_\text{out} \times N_\text{out}} \}_{i=1}^N$ as a block-diagonal matrix $\mathbf{\Lambda}_{\mathbf{X}_\text{tr}} \in \mathbb{R}^{NN_\text{out} \times NN_\text{out}}$. 
Then, by Woodbury matrix identity~\cite{woodbury1950inverting}, $\mathbf{\Sigma}$ can be rephrased:
\begin{equation*}
\begin{aligned}
\small
\mathbf{\Sigma}&=\left[\sum_i {\mathbf{J}^*_{\vx_i}}^\top \mathbf{\Lambda}_i \mathbf{J}^*_{\vx_i} + 1/\sigma_0^2\mathbf{I}_{\text{dim}(\vtheta)}\right]^{-1}\\
&= \left[{\mathbf{J}^*_{\mathbf{X}_\text{tr}}}^\top \mathbf{\Lambda}_{\mathbf{X}_\text{tr}} \mathbf{J}^*_{\mathbf{X}_\text{tr}} + 1/\sigma_0^2\mathbf{I}_{\text{dim}(\vtheta)}\right]^{-1}\\
&= \sigma_0^2 \left(\mathbf{I}_{\text{dim}(\vtheta)} - {\mathbf{J}^*_{\mathbf{X}_\text{tr}}}^\top \left[1/\sigma_0^2\mathbf{\Lambda}_{\mathbf{X}_\text{tr}}^{-1}+\mathbf{J}^*_{\mathbf{X}_\text{tr}} {\mathbf{J}^*_{\mathbf{X}_\text{tr}}}^\top\right]^{-1} {\mathbf{J}^*_{\mathbf{X}_\text{tr}}}\right).
\end{aligned}
\end{equation*}
Consequently, the covariance in \cref{eq:la-naive} becomes:
\begin{align*}
\scriptsize
&\mathbf{J}^*_{\vx} \mathbf{\Sigma}{\mathbf{J}^*_{\vx'}}^\top\\
=&\sigma_0^2 \mathbf{J}^*_{\vx} \left(\mathbf{I}_{\text{dim}(\vtheta)} - {\mathbf{J}^*_{\mathbf{X}_\text{tr}}}^\top \left[1/\sigma_0^2\mathbf{\Lambda}_{\mathbf{X}_\text{tr}}^{-1}+\mathbf{J}^*_{\mathbf{X}_\text{tr}} {\mathbf{J}^*_{\mathbf{X}_\text{tr}}}^\top\right]^{-1} {\mathbf{J}^*_{\mathbf{X}_\text{tr}}}\right) {\mathbf{J}^*_{\vx'}}^\top \\
=& \sigma_0^2 \left(\mathbf{J}^*_{\vx}{\mathbf{J}^*_{\vx'}}^\top - \mathbf{J}^*_{\vx}{\mathbf{J}^*_{\mathbf{X}_\text{tr}}}^\top \left[1/\sigma_0^2\mathbf{\Lambda}_{\mathbf{X}_\text{tr}}^{-1}+\mathbf{J}^*_{\mathbf{X}_\text{tr}} {\mathbf{J}^*_{\mathbf{X}_\text{tr}}}^\top\right]^{-1} {\mathbf{J}^*_{\mathbf{X}_\text{tr}}}{\mathbf{J}^*_{\vx'}}^\top \right) \\
=& \sigma_0^2 \left(\kappa_\text{NTK}(\vx, \vx') - \kappa_\text{NTK}(\vx, {\mathbf{X}_\text{tr}}) \left[1/\sigma_0^2\mathbf{\Lambda}_{\mathbf{X}_\text{tr}}^{-1}+\kappa_\text{NTK}({\mathbf{X}_\text{tr}}, {\mathbf{X}_\text{tr}})\right]^{-1} \kappa_\text{NTK}({\mathbf{X}_\text{tr}}, \vx') \right)\tag*{($\kappa_\text{NTK}(\vx, \vx'):=\partial_{\vtheta}g(\vx, \vtheta_\text{MAP}) \partial_{\vtheta}g(\vx', \vtheta_\text{MAP})^\top={\mathbf{J}^*_{\vx}}{\mathbf{J}^*_{\vx'}}^\top$)} \\
\approx & \sigma_0^2 \left(\tilde{\Psi}(\vx)\tilde{\Psi}(\vx')^\top - \tilde{\Psi}(\vx)\tilde{\Psi}({\mathbf{X}_\text{tr}})^\top \left[1/\sigma_0^2\mathbf{\Lambda}_{\mathbf{X}_\text{tr}}^{-1}+\tilde{\Psi}({\mathbf{X}_\text{tr}})\tilde{\Psi}({\mathbf{X}_\text{tr}})^\top\right]^{-1} \tilde{\Psi}({\mathbf{X}_\text{tr}})\tilde{\Psi}(\vx')^\top \right)\tag*{(Mercer's theorem)} \\
= &   \tilde{\Psi}(\vx) \sigma_0^2 \left(\mathbf{I}_k - \tilde{\Psi}({\mathbf{X}_\text{tr}})^\top \left[1/\sigma_0^2\mathbf{\Lambda}_{\mathbf{X}_\text{tr}}^{-1}+\tilde{\Psi}({\mathbf{X}_\text{tr}})\tilde{\Psi}({\mathbf{X}_\text{tr}})^\top\right]^{-1} \tilde{\Psi}({\mathbf{X}_\text{tr}}) \right) \tilde{\Psi}(\vx')^\top\\
= & \tilde{\Psi}(\vx)  \left[\tilde{\Psi}({\mathbf{X}_\text{tr}})^\top \mathbf{\Lambda}_{\mathbf{X}_\text{tr}} \tilde{\Psi}({\mathbf{X}_\text{tr}}) + {1}/{\sigma_0^2}\mathbf{I}_k\right]^{-1} \tilde{\Psi}(\vx')^\top\tag*{(Woodbury matrix identity)} \\ 
= & \tilde{\Psi}(\vx)  \left[\sum_i \tilde{\Psi}(\vx_i)^\top \mathbf{\Lambda}_i \tilde{\Psi}(\vx_i) + {1}/{\sigma_0^2}\mathbf{I}_k\right]^{-1} \tilde{\Psi}(\vx')^\top \\ 
\end{align*}
where $\tilde{\Psi}(\vx) := [\sqrt{\hat{\mu}_1}\hat{\psi}_1(\vx), ..., \sqrt{\hat{\mu}_k}\hat{\psi}_k(\vx)] \in \mathbb{R}^{N_\text{out} \times k}$ with $\hat{\psi}_i: \mathcal{X}\rightarrow \mathbb{R}^{N_\text{out}}$ as the approximate multi-output eigenfunction corresponding to the approximate $i$-th largest eigenvalues $\hat{\mu}_i$ of $\kappa_\text{NTK}$. 
Note that $\tilde{\Psi}({\mathbf{X}_\text{tr}})$ is the concatenation of $\tilde{\Psi}(\vx_1), ..., \tilde{\Psi}(\vx_N)$ and hence is of size $NN_\text{out} \times k$.

Thus, we obtain \cref{eq:la-our}.
\end{proof}

\section{Experiment Settings}
\label{app:exp}

\textbf{Implementation of the Nystr\"{o}m method} To find the top-$k$ eigenpairs, we use the \texttt{scipy.linalg.eigh} API to eigendecompose the kernel matrix $\kappa(\mathbf{X}_\text{tr}, \mathbf{X}_\text{tr}) \in \mathbb{R}^{N\times N}$ with the argument \texttt{subset\_by\_index} as $[N-k, N-1]$. 

\textbf{Experiments on classic kernels}
We use 3-layer MLP to instantiate the neural eigenfunctions.
The hidden size of the MLP is set as $32$. 
We use a mixture of \texttt{Sin} and \texttt{Cos} activations, namely, one half of the hidden neurons use \texttt{Sin} activation and the other half use \texttt{Cos} activation.

\textbf{Experiments on MLP-GP kernels} The architecture of concern is a 3-layer MLP.
To set up the MLP-GP kernels, for every linear layer, the prior on the weights is set as $\mathcal{N}(0, 2/\omega)$ with $\omega$ as the layer width, and the prior on the bias is set as $\mathcal{N}(0, 1)$.
To compute the kernel matrix on the training data, we instantiate a finitely wide MLP whose width is $16$, and perform MC estimation by virtue of the strategies in \cref{sec:scaleup}.
The number of MC samples $S$ is set as $10000$.

\textbf{Experiments on CNN-GP kernels} To set up the CNN-GP kernels, for every convolutional/linear layer, the prior on the weights is set as $\mathcal{N}(0, 2/\text{fan\_in})$, and the prior on the bias is set as $\mathcal{N}(0, 0.01)$.
To compute the kernel matrix on the training data, we instantiate a finitely wide CNN whose width is $16$, and perform MC estimation by virtue of the strategies in \cref{sec:scaleup}.
The number of MC samples $S$ is set as $2000$. 
We instantiate $\Scale[0.9]{\hat{\psi}}$ as the CNNs with the same architecture as the concerned CNN-GP kernel but augmented with batch normalization~\cite{ioffe2015batch}, and set the the layers width as $32, 64, 128$. 
We use $6000$ randomly sampled MNIST training images (due to resource constraint) to perform the Nystr\"{o}m method.
The used polynomial kernel and RBF kernel take the follows forms:
\begin{equation*}
    \kappa(x,x')=(0.001x^\top x' + 1)^{10}
\end{equation*}
\begin{equation*}
    \kappa(x,x')=\exp(-\Vert x - x'\Vert^2 / 2 / 100),
\end{equation*}
where the hyper-parameters are found by grid search. 

\textbf{Experiments on NTKs} 
For the trained NN classifier, we fuse the BN layers into the convolutional layers to get a compact model that possesses $269,034$ parameters. 
To compute the kernel matrix on the training data, we perform MC estimation by virtue of the strategies in \cref{sec:scaleup} and the number of MC samples $S$ is set as $4000$. 
We instantiate $\Scale[0.9]{\hat{\psi}}$ as widened ResNet-20 with widening factor of $2.0$.

\textbf{Experiments on the linearised Laplace approximation with NeuralEF} We train the CIFAR-10 classifiers with ResNet architectures for totally $150$ epochs under MAP principle. 
The optimization settings are identical to the above ones.
In particular, the weight decay is $10^{-4}$, thus we can estimate the prior variance $\sigma_0^2 = \frac{1}{50000 \times 10^{-4}} = 0.2$ where $50000$ is the number of training data $N$. 
After classifier training, we fuse the BN layers into the convolutional layers to get a compact model. 
To compute the kernel matrix on the training data, we perform MC estimation by virtue of the strategies in \cref{sec:scaleup} and the number of MC samples $S$ is set as $4000$. 
We instantiate $\Scale[0.9]{\hat{\psi}}$ as widened ResNet-20 with widening factor of $2.0$. 
We learn the matrix-valued NTK kernel with the strategy provided in \cref{sec:method}. 
When testing, we use $256$ MC samples from the function-space posterior to empirically estimate the predictive distribution as the categorical likelihood is not conjugate of Gaussian. 
We scale the sampled noises by $20$ given the observation that they are pretty tiny orginally.

\begin{figure*}[t]
\centering
\begin{minipage}{\linewidth}
\centering
    \begin{subfigure}[b]{0.97\linewidth}
    \centering
    \includegraphics[width=\linewidth]{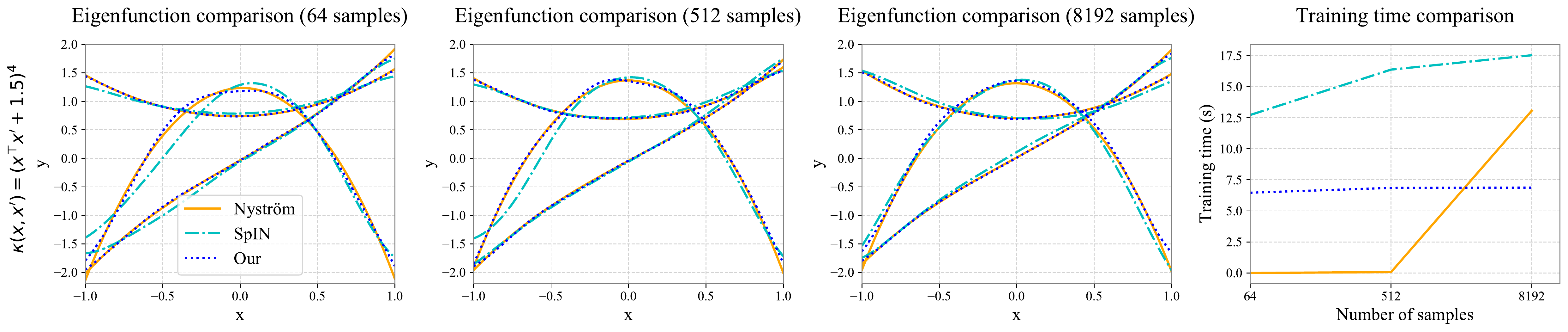}
        \vspace{-3.ex}
    \end{subfigure}
\end{minipage}
\begin{minipage}{\linewidth}
\vspace{-1ex}
\centering
    \begin{subfigure}[b]{0.97\linewidth}
    \centering
    \includegraphics[width=\linewidth]{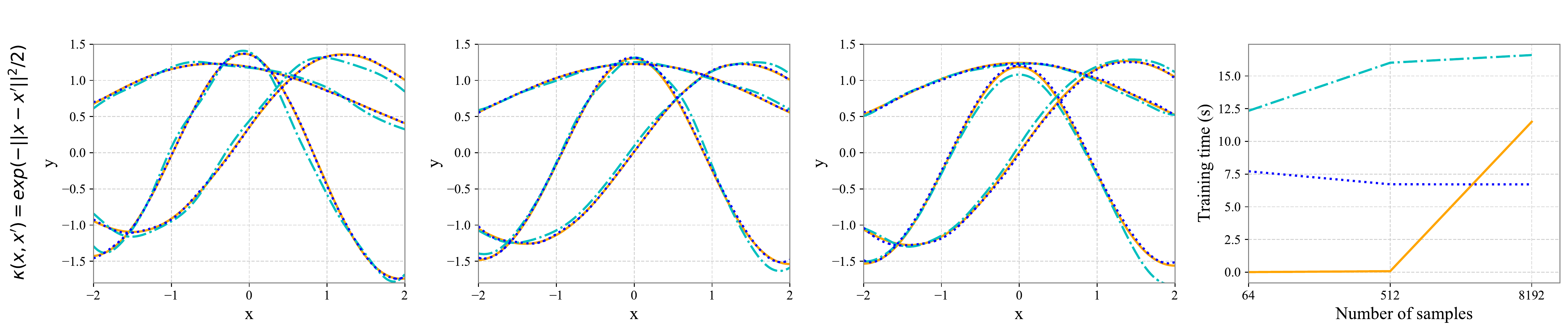}
        \vspace{-3.ex}
    \end{subfigure}
\end{minipage}
\caption{\footnotesize Estimate the eigenfunctions of the polynomial and RBF kernels with the Nystr\"{o}m method, SpIN, and NeuralEF (our). 
NeuralEF behaves as well as Nystr\"{o}m method but consumes \emph{nearly constant} training time w.r.t. sample size. 
} 
\label{fig:illu-sub}
\end{figure*}

\textbf{Experiments on the implicit kernel induced by SGD trajectory} To collect the SGD iterates, we first train the CIFAR-10 classifiers for $150$ epochs.
We set the initial learning rate as $0.1$, and scale the learning rate by $0.1$ at $80$-th and $120$-th epochs.
We use SGD optimizer with $0.9$ momentum to train the the classifiers, with the batch size set as $128$.
From $150$-th epoch to $155$-th epoch, we linearly increase the learning rate from $0.001$ to $0.05$, and then keep it constant until the classifiers have been trained for $200$ epochs. 
We totally collect $M=50$ weight samples for SWA and SWAG (one at per epoch). 
We cannot use more weight samples for constructing the Gaussian covariance in SWAG due to memory constraint. 
However, with NeuralEF introduced for kernel approximation, we can use many weight samples for defining $\kappa_\text{SGD}$ as eventually we save only the top eigenfunctions of the kernel instead of the kernel itself. 
In practice, we collect the function evaluations on the training set of around $M=4000$ weights from the SGD trajectory, based on which we estimate the training kernel matrix. 
We instantiate $\Scale[0.9]{\hat{\psi}}$ as widened ResNet-20 with widening factor of $2.0$. 
When testing, we use $256$ MC samples from the function-space posterior to empirically estimate the predictive distribution as the categorical likelihood is not conjugate of Gaussian.

\section{More Experiments}
\subsection{More Results on Polynomial and RBF kernels}
\label{app:classic}
We provide more results of various kernel approximation methods for the aforementioned polynomial and RBF kernels in \cref{fig:illu-sub}.

\subsection{Visualization of the Projections of MNIST Test Images}
\label{app:mnist-results}
We plot the top-$3$ dimensions of the projections belonging to the MNIST test images produced by NeuralEF in \cref{fig:mnist}. 
The NeuralEF model is trained to deconstruct the CNN-GP kernel mentioned in \cref{sec:exp-nngp}. 
We see the projections form class-conditional clusters, implying that NeuralEF can learn the discriminative structures in the CNN-GP kernel.

\begin{figure*}[t]
\centering
\begin{subfigure}[b]{0.99\linewidth}
\centering
\includegraphics[width=0.3\linewidth]{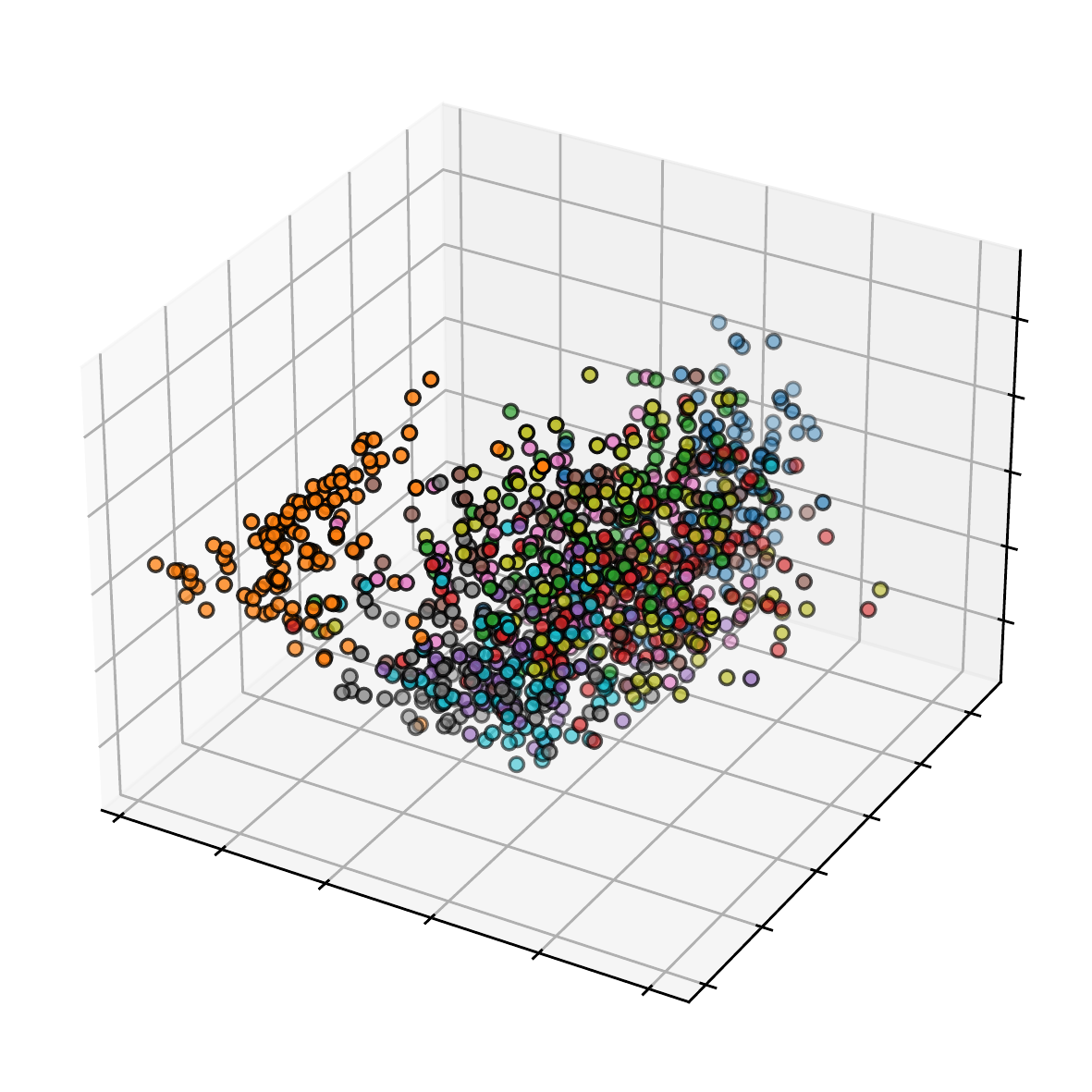}
    \vspace{-4ex}
\end{subfigure}
\caption{\footnotesize Project MNIST test images to the 3-D space by the approximate top-$3$ eigenfunctions of the CNN-GP kernel found by NeuralEF. 
Each color represents an category.}
\label{fig:mnist}
\vspace{-1.ex}
\end{figure*}

\subsection{Learn the Kernel Induced by SGD Trajectory}
\label{sec:sgd}

There has been a surge of interest in exploiting the SGD trajectory for Bayesian deep learning, attirbuted to the close connection between the stationary distribution of SGD iterates and the Bayesian posterior~\cite{mandt2017stochastic}. 
SWAG~\cite{maddox2019simple} is a typical method in this line: it collects a bunch of NN weights from the SGD trajectory then constructs approximate weight-space posteriors.
In fact, akin to the random feature approaches, the SGD iterates %
implicitly induce a kernel:
\begin{equation*}
\small
\kappa_\text{SGD}(\vx, \vx') = \frac{1}{M}\sum_{i=1}^M \left(g(\vx, \vtheta_i) - \bar{g}(\vx)\right)\left(g(\vx', \vtheta_i) - \bar{g}(\vx')\right)^\top,
\end{equation*}
where $g$ refers to an NN function, $\{\vtheta_i\}_{i=1}^M$ are the weights SGD traverses, and $\bar{g}(\cdot) = \frac{1}{M} g(\cdot, \vtheta_i)$ is the ensemble of SGD iterates. 
We can then define a Gaussian process $\mathcal{GP}(\bar{g}(\vx), \kappa_\text{SGD}(\vx, \vx'))$ to approximate the true function-space posterior, which leads to the \emph{posterior predictive}
\begin{equation*}
\small
p(\vx_\text{new}) = \int \mathcal{GP}(f|\bar{g}(\vx), \kappa_\text{SGD}(\vx, \vx'))p(\vx_\text{new} | f) df.
\end{equation*}

Yet, the evaluation of both $\bar{g}$ and $\kappa_\text{SGD}$ on a datum $\vx$ entails $M$ forward passes, resulting in the dilemma of deciding efficiency or exactness. 
Nonetheless, we can conjoin both by (\RN{1}) taking the Stochastic Weight Averaging (SWA)~\cite{izmailov2018averaging}, which predicts with the average weights, %
as a substitute for $\bar{g}$\footnote{\citet{maddox2019simple} found that SWA is basically on par with $\bar{g}$ in terms of performance.} and (\RN{2}) proceeding with the top eigenfunctions of $\kappa_\text{SGD}$ instead of the kernel itself.

To verify, we experiment on the full CIFAR-10 dataset with ResNets. 
Following the common assumption in GP classification, we assume that the kernel correlations among output dimensions are negligible, so $\vect{\kappa}^{\mathbf{X},\mathbf{X}}$ degrads to $N_\text{out}$ matrices of size $B \times B$. 
Viewing $\vect{\hat{\psi}_j}^{\mathbf{X}}$ as $N_\text{out}$ vectors of size $B$, we can then compute the losses \cref{eq:loss} dimension by dimension, sum them up, and invoke once backprop.
With this strategy, we find the top-$10$ eigenpairs at per output dimension, %
and then use them for kernel recovery and posteriori prediction.
We dub our model as SWA+NeuralEF. 
More experimental details are presented in \cref{app:exp}. 
The baselines of concern include the vanilla models trained by SGD, SWA, and SWAG. 
The results on test accuracy, negative log-likelihood (NLL), and expected calibration error (ECE)~\cite{guo2017calibration} are given in \cref{fig:cifar-sgd-trajectory}.
We further assess the models with ResNet-20 architecture on CIFAR-10 corruptions~\cite{hendrycks2019benchmarking}, a standard OOD testbed for CIFAR-10 models, with \cref{fig:cifar-corruption-sgd-trajectory} presenting the results.

We can see that SWA+NeuralEF is on par with or superior over SWAG across evaluation metrics, and outperforms SWA, especially in the aspect of ECE.
We emphasize that another merit of SWA+NeuralEF is that, unlike SWAG, the storage cost of SWA+NeuralEF is \emph{agnostic} to $M$.

\begin{figure*}[t]
\centering
\begin{subfigure}[b]{0.325\linewidth}
\centering
\includegraphics[width=\linewidth]{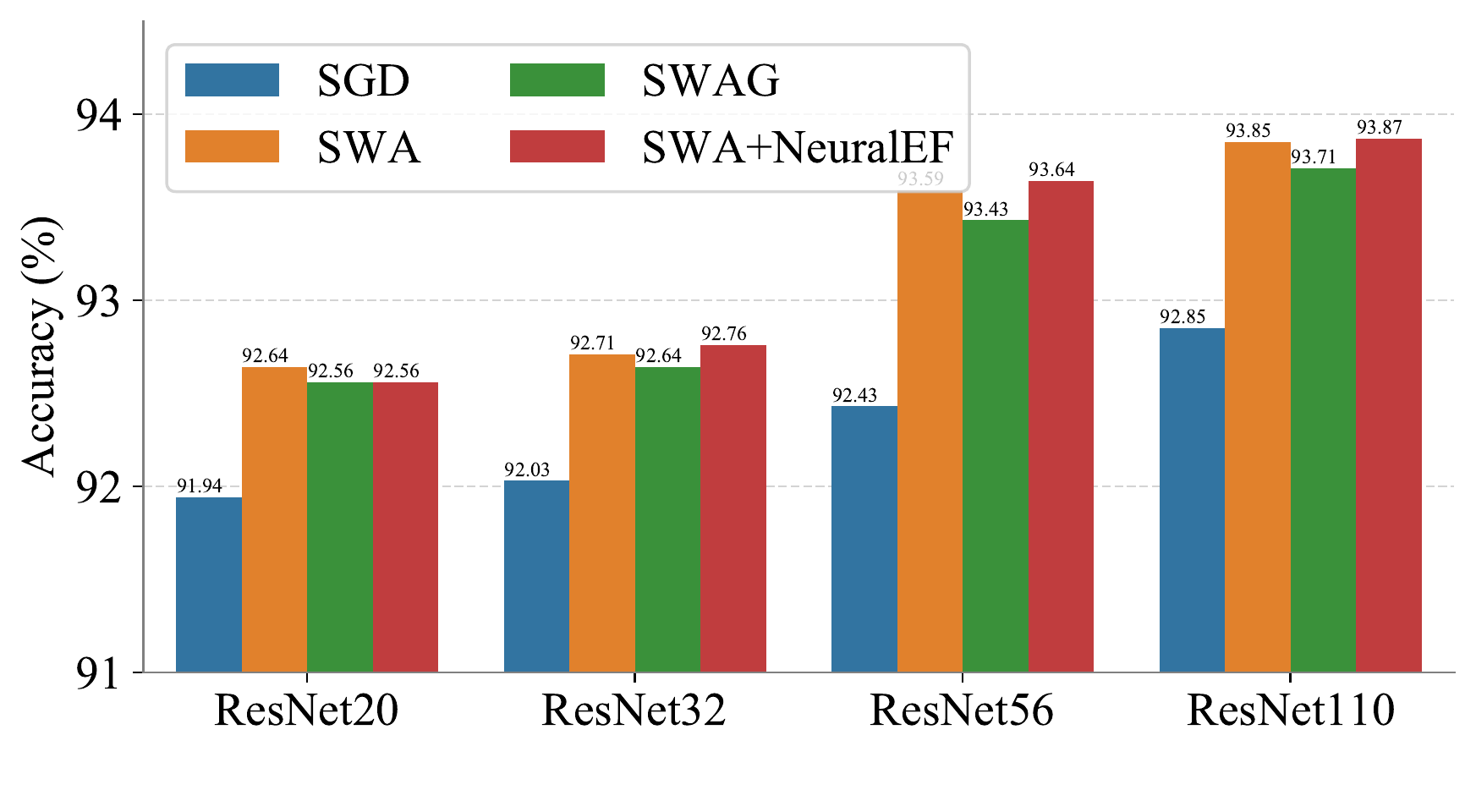}
    \vspace{-3.ex}
\end{subfigure}
\begin{subfigure}[b]{0.325\linewidth}
\centering
\includegraphics[width=\linewidth]{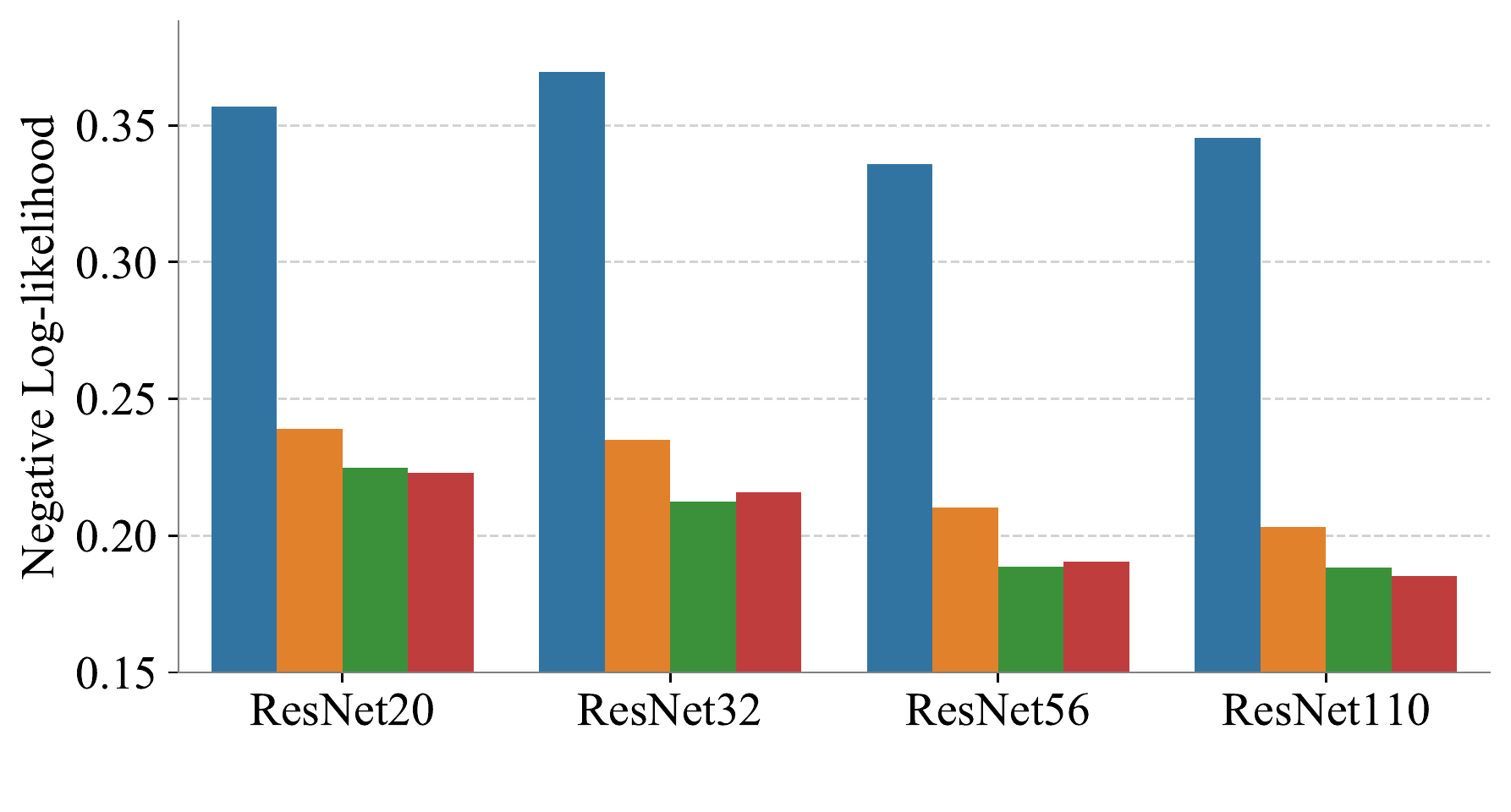}
    \vspace{-3.ex}
\end{subfigure}
\begin{subfigure}[b]{0.325\linewidth}
\centering
\includegraphics[width=\linewidth]{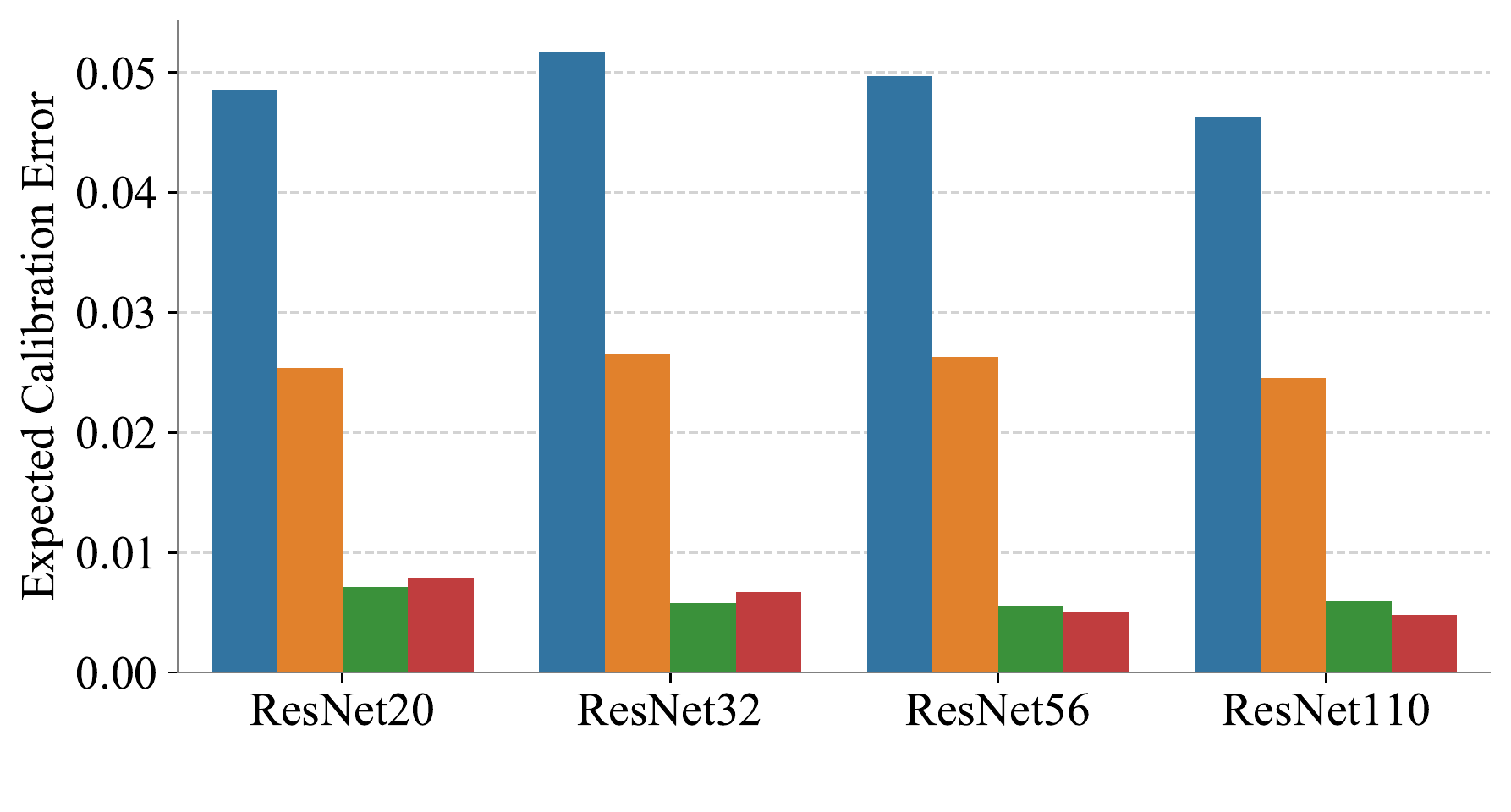}
    \vspace{-3.ex}
\end{subfigure}
\vspace{-2ex}
\caption{\footnotesize Test accuracy $\uparrow$, NLL $\downarrow$, and ECE $\downarrow$ comparisons among models on CIFAR-10. }
\label{fig:cifar-sgd-trajectory}
\end{figure*}

\begin{figure*}[t]
\centering
\begin{subfigure}[b]{0.49\linewidth}
\centering
\includegraphics[width=\linewidth]{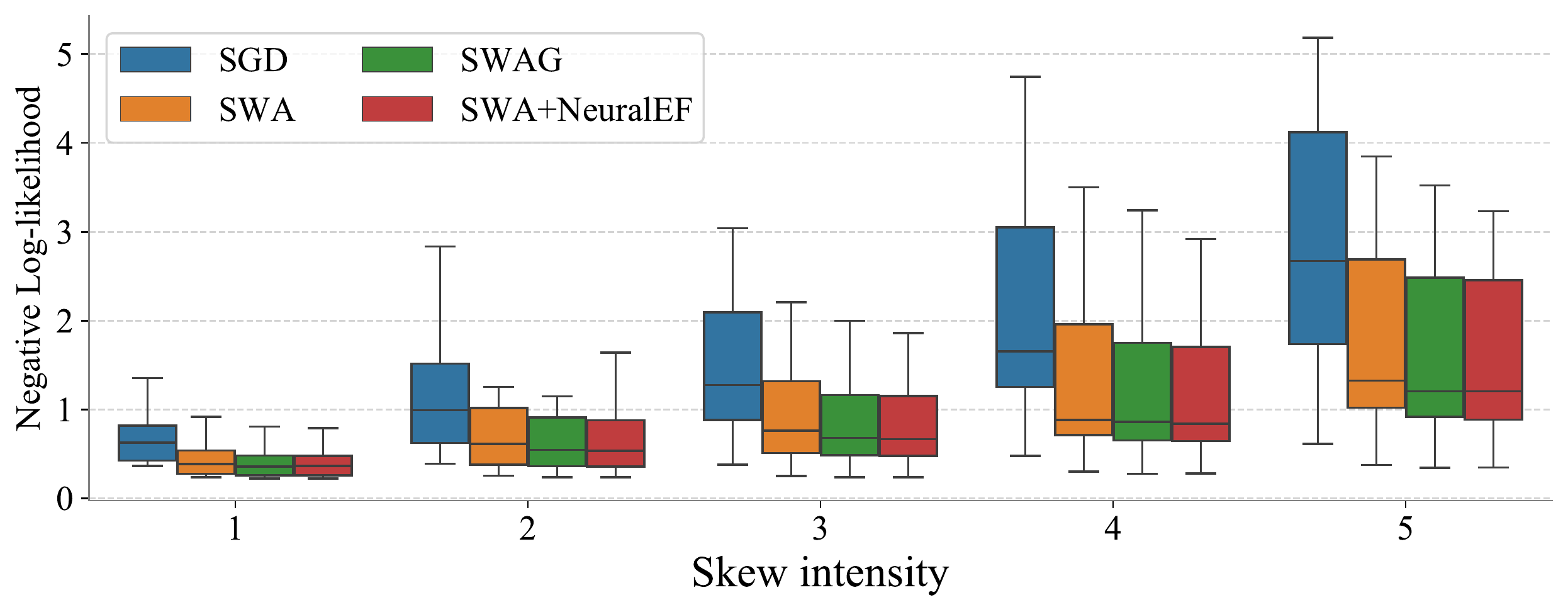}
    \vspace{-3.ex}
\end{subfigure}
\begin{subfigure}[b]{0.49\linewidth}
\centering
\includegraphics[width=\linewidth]{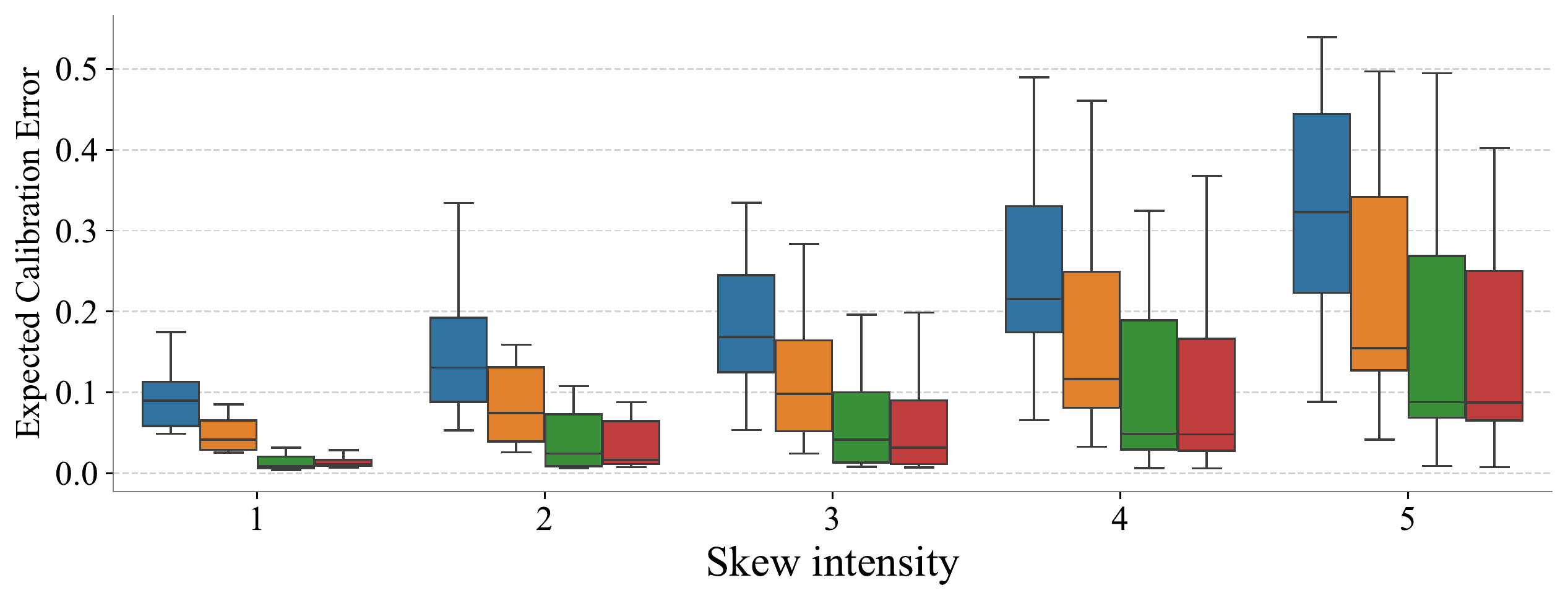}
    \vspace{-3.ex}
\end{subfigure}
\vspace{-1ex}
\caption{\footnotesize NLL $\downarrow$ and ECE $\downarrow$ on CIFAR-10 corruptions for models trained with ResNet-20 architecture. The results across 19 types of skew are summarized in the boxes.}
\label{fig:cifar-corruption-sgd-trajectory}
\vspace{-1ex}
\end{figure*}

\end{document}